\documentclass[journal]{IEEEtran}

\usepackage[utf8]{inputenc} 
\usepackage[T1]{fontenc}    
\usepackage{hyperref}       
\usepackage{url}            
\usepackage{booktabs}       
\usepackage{amsfonts}       
\usepackage{microtype}      
\usepackage{graphicx}
\usepackage{xcolor}
\usepackage[cmex10]{amsmath}
\usepackage{amssymb}
\usepackage{amsthm}
\usepackage[tight,footnotesize]{subfigure}
\usepackage{caption}
\usepackage{booktabs}
\usepackage{comment}

\newcommand{\lfunc}{\mathcal{\bar{L}}}
\newcommand{\R}{\mathbf{R}}
\newcommand{\bR}{\mathbb{R}}
\newcommand{\bC}{\mathbb{C}}
\newcommand{\sF}{\mathcal{F}}
\newcommand{\sA}{\mathcal{A}}

\newtheorem{proposition}{Proposition}
\newtheorem{theorem}{Theorem}
\newtheorem{remark}{Remark}

\begin{document}
\title{The Loss Surface Of Deep Linear Networks Viewed Through The Algebraic Geometry Lens}

\author{Dhagash Mehta, Tianran Chen, Tingting Tang and Jonathan D. Hauenstein%
\thanks{D.~Mehta is with the Autonomous and Intelligent Systems Department, United Technologies Research Center,
East Hartford, CT, USA. e-mail: dhagashbmehta@gmail.com}%
\thanks{T.~Chen is with the Department Mathematics and Computer Science,
Auburn University at Montgomery, Montgomery, AL, USA
e-mail: ti@nranchen.org.}%
\thanks{T.~Tang and J.D.~Hauenstein are with Department of Applied and Computational Mathematics and Statistics, 
University of Notre Dame, Notre Dame, IN, USA. e-mail: \{ttang,hauenstein\}@nd.edu}
}

\maketitle

\begin{abstract}
    By using the viewpoint of modern computational algebraic geometry,
    we explore properties of the optimization landscapes of the deep linear neural network models. After clarifying on the various definitions of "flat" minima, we show that the geometrically flat minima, which are merely artifacts of residual continuous symmetries of the deep linear networks, can be straightforwardly removed by a generalized $L_2$ regularization. Then, we establish upper bounds on the number of isolated stationary 
    points of these networks with the help of algebraic geometry. Using these upper bounds and utilizing a numerical algebraic geometry method, we find \textit{all} stationary points for modest depth and matrix size. We show that in the presence of the non-zero regularization, deep linear networks indeed possess local minima which are not the global minima. Our computational results further clarify certain aspects of the loss surfaces of deep linear networks and provides novel insights.
\end{abstract}



\section{Introduction}
Advancement in both computational algorithms and computer hardware has led a surge in applied and
theoretical research activities for deep learning techniques. Though the applied side of the research
has been remarkably successful with applications in such areas as computer vision, natural language
processing, machine translation, object recognition, speech and audio recognition, stock market analysis,
bioinformatics, and drug analysis \cite{lecun2015deep,bengio2015deep}, a thorough theoretical understanding
of the techniques is yet to be achieved. 

One of the urgent theoretical issues that is of particular interest to the present work is the highly
non-convex nature of the underlying optimization problems that the techniques bring with them: the cost
function (also called the loss function) of a typical deep learning task, such as the mean squared error
between the observed data and predicted data from the deep network, is known to have numerous local minima. 
Finding a minimum which possesses a desired characteristic is usually a daunting task, especially for a
high-dimensional problem, and most of the times it turns out to be an NP hard problem
\cite{blum1988training}. Nonetheless, in practice, for a typical deep learning task, a reasonably good
minimization algorithm, such as a stochastic gradient descent (SGD) based method, converges to a minimum
that performs well. This observation, along with several empirical results
\cite{baldi1989neural,gori1992problem,yu1995local,saxe2013exact,NguyenLoss,goodfellow2014qualitatively,andoni2014learning,soudry2016no,2017arXiv171007406L}, 
has led to the belief that there is no bad minima in the loss functions of \textit{deep}
networks. In \cite{choromanska2014loss} (cf., \cite{kawaguchi2016deep,soudry2017exponentially}), the loss function of a typical dense feed-forward
neural network with rectified linear (ReLu) units was approximated by the Hamiltonion of a physics model
called the spherical $p$-spin model and analyzed using random matrix theory and statistical physics
techniques. It was concluded that for this approximate model, the number of minima and saddle points at
which the value of the loss function is beyond certain threshold vanishes as the number of hidden layers
increases (cf. \cite{sagun2014explorations}) supporting the ``no bad minima'' scenario, 
though the assumptions made to bring the deep network to the spin glass model were unrealistic. 

The specific characteristics of the minima that numerical minimization algorithms may be looking for may 
play a crucial role in determining if and why the 
algorithm finds them so efficiently \cite{mehta2018}. 
In the literature, the distance of a minimum from the global minimum has been the defining characteristic 
of the ``goodness'' of minima, i.e., if the difference between the loss function at the local 
minimum and that at the global minimum is within certain threshold, 
then the minimum is good enough for the task. 
There are recent examples of artificial neural networks with such suboptimal minima for deep nonlinear networks 
\cite{coetzee1997488,swirszcz2017local,BallardDMMSSW17,yun2018critical} 
(and \cite{sontag1989backpropagation} for neural networks without hidden layers). In \cite{wu2017towards}, 
good and bad minima are distinguished based not only in terms of the the performance of the network on the 
training data but also on the testing data, and it is empirically shown that the volume of basin of 
attraction of good minima dominates over that of bad minima (cf., \cite{mehta2018, kawaguchi2017generalization} for discussions on good and bad minima).In \cite{mehta2018}, the shape and size of the decision boundaries as well as size of the effective network (measured in terms of number of non-zero weights) are shown to provide further metrics of goodness of minima.

Another scenario that was proposed in \cite{coetzee1997488,dauphin2014identifying} and further 
confirmed in \cite{im2016empirical} is that the loss function of a deep network is typically proliferated 
by the large number of saddle points (and degenerate saddles \cite{sankar2017saddles}) 
compared to minima.  Gradient based optimization algorithms may get stuck at a saddle point 
rather than a minimum which slows down the learning. This is a typical nature of the types
of nonlinear multivariate cost functions one encounters in physics and chemistry \cite{doye2002saddle,2002JChPh.116.3777D,Wales04,Mehta:2011xs,mehta2014potential,Hughes:2014moa,Mehta:2013fza,mehta2015algebraic}. 
Several  ways to escape from saddle points provided no singular saddle points exist
have been developed \cite{dauphin2014identifying,pmlr-v40-Ge15,nesterov2006cubic}, and also in the presence of singular saddle points in certain specific cases \cite{anandkumar2016efficient,2017arXiv171007406L,panageas2016gradient}.

The ability of a minimization routine to find the global minimum also depends to the structure of the loss surface. 
In \cite{BallardDMMSSW17}, the loss surface of a single hidden layer feedforward neural network was shown
to have a \textit{single funnel} structure, i.e., asymmetric barrier heights between 
two adjacent local minima, ensureing that the routine quickly relaxes downhill 
towards the global minimum \cite{Wales03}. 

There are analytical and numerical results that either achieve the global minimum or 
construct necessary and sufficient conditions for a point to be the global minimum
for restricted classes of deep networks
\cite{NguyenLoss,haeffele2015global,arora2016understanding,yun2017global,feizi2017porcupine,zhang2017bpgrad}.

In \cite{baldi1989neural}, a detailed mathematical analysis of a class of a simpler model, the deep linear networks, was performed. Since then, the model has become one of the ideal test-grounds of ideas in artificial neural networks and deep learning \cite{baldi1995learning}. Below we first briefly describe the formulation of the model and then describe the previous results.

\subsection{Deep Linear Networks}

A deep linear network is an artificial neural network with multiple hidden layers with each neuron having a linear activation function. It is the linearity of the activation functions that separates deep linear networks from the deep nonlinear networks used in practice in which each neuron has nonlinear (or, at least, piecewise linear) activation function. The mean squared error for the deep linear networks with the usual $L_2$-regularization is defined to be \cite{kawaguchi2016deep,baldi1995learning}
\begin{equation}\label{equ:reg_loss}
    L(W) = \mathcal{\bar{L}}(W) + \lambda \sum_{i=1}^{H+1}\|W_i\|_2^2,
\end{equation}
with
\begin{equation}\label{equ:loss}
    \mathcal{\bar{L}}(W) = 
    \frac{1}{2} 
    \sum_{i=1}^m 
    \
    \bigg\| (W_{H+1} W_H \cdots W_1 X)_{\cdot,i} - Y_{\cdot,i} \bigg\|_2^2,
\end{equation}
where $\|.\|$ is the vector norm, $W_i$ is the weight matrix for the $i^{\rm th}$ layer 
with hidden layers from $i = 1,\dots,H$ and output layer $H+1$,
and $\lambda \geq 0$ is the regularization parameter. 
For $m$ data points in the training set, $d_x$ input dimensions, and $d_y$ output dimensions, 
the dimensions of $X$ and $Y$ are $d_x\times m$ and $d_y\times m$, respectively. 
Then, with $d_i$ hidden neurons in the $i^{\rm th}$ hidden layer, 
matrix multiplication yields that $W_1\in \mathbb{R}^{d_1\times d_x}, W_2\in \mathbb{R}^{d_2\times d_1}, \dots, W_{H+1}\times \mathbb{R}^{d_y \times d_H}$. 
We also denote $k=\mbox{min}(d_H, \dots, d_1)$, i.e., the number of neurons in the hidden layer with the 
smallest width. The number of weights, or variables, is $n = d_x d_1 d_2 \cdots d_H d_y$.

The simplicity of the deep linear network yields that it can approximate functions 
which are linear in $X$ and $Y$ though nonlinear in weights, 
whereas the real-world data may also possess nonlinearity in $X$. 
However, these networks contain most of the basic ingredients
of a typical deep nonlinear networks. Due to the network architecture, the loss function of the 
deep linear networks (Eqs.~\eqref{equ:reg_loss} and \eqref{equ:loss}) are still non-convex and 
non-trivial to analyze in a general setting. 
Understanding the loss surfaces of the deep linear networks also may enhance our understanding of the same for deep nonlinear networks.

\subsection{Earlier Works on Loss Surfaces of Deep Linear Networks}
Almost all the existing results for deep linear networks are for the $\lambda = 0$ case. For this case, deep linear network with $H=1$, 
under the assumptions that (1) $XX^T$ and $XY^T$ are invertible matrices, 
(2) $\Sigma = Y X^T (X X^T)^{-1} X Y^T$ has $d_y$ distinct eigenvalues, 
(3) $d_x = d_y$, i.e., an autoencoder, and 
(4) $k < d_x,d_y$, it was shown in \cite{baldi1989neural} that: 
\begin{enumerate}
    \item $\mathcal{\bar{L}}(W)$ is convex if either $W_1$ or $W_2$ are fixed, and the entries of the other vary. 
    \item Every local minimum is a global minimum.
\end{enumerate}
Moreover, \cite{baldi1989neural} also conjectured 
the following upon dropping the $H=1$ condition but retaining the other assumptions:
\begin{enumerate}
    \item $\mathcal{\bar{L}}(W)$ is convex if the entries of one $W_i$ vary while the others are fixed.
    \item Every local minimum is a global minimum.
\end{enumerate}
This conjecture was proven in more general settings of deep linear networks in \cite{kawaguchi2016deep,lu2017depth,2018arXiv180504938Z}, for deep linear complex-valued autoencoders with one hidden layer \cite{baldi2012complex}, as well as for deep linear residual networks \cite{hardt2016identity,yun2018critical}). Additionally, \cite{yun2017global} provides several necessary and sufficient 
conditions on global optimality based on rank conditions on the $W_i$ matrices for deep linear networks.

In \cite{zhou2017critical} (cf.~\cite{zhou2017characterization}), analytical forms of the 
stationary points (including minima) characterizing the values of the loss function were 
presented for deep linear as well as for certain limited cases of unregularized deep nonlinear networks.
The aforementioned necessary and sufficient conditions for global optimality were also 
reformulated with the help of the analytical form of the critical points.

Layer-wise training of deep linear networks was investigated from the dynamical systems point of view 
in \cite{saxe2013exact} (see also \cite{2018arXiv180600730B}) and was concluded that the learning speed can remain finite even 
in the $H\to \infty$ limit for a special class of initial conditions on the weights,
likely due to no local minima present in the landscape.

The $\lambda > 0$ case is considered in \cite{taghvaei2017regularization}
by modeling a linear networks (though, not a deep linear network) with $L_2$-regularization term as a
continuous time optimal control problem.  The problem of characterizing the critical points of the
deep linear networks was reduced to solving a finite-dimensional nonlinear matrix-valued equation. 
Here, the continuous-time is essentially a surrogate index for layers and the final weight matrix 
was assumed to be square. It was shown that for a special case of the model, even for small amount of regularization, saddle points emerge.

\subsection{Our Contribution}
The main conceptual contribution in this paper is to identify solving the gradients of deep networks  as a 
computational algebraic geometry \cite{CLO:07,CLO:98} problem. We review the existing literature related to the optimization landscape and put our algebraic geometry point of view into perspective. The other key contributions from this viewpoint are summarized as follows:
\begin{enumerate}
    \item We clarify on various definitions of \textit{flat} minima, and distinguish the geometric definition of flat minima from the other definitions. We then show their existence in the unregularized landscapes of deep linear networks.
    \item Then, we prove that a straightforward extension of $L_2$ regularization can guarantee to remove all these flat minima: these flat minima are only an artifact of the underlying residual symmetries of the equations and can be removed using, for example, the generalized  $L_2$-regularization.
    \item We take up a novel question on deep learning loss surfaces: how many isolated stationary points and, more specifically, minima are there in a typical deep learning loss surface? With the help of algebraic geometry, we provide the first results in this direction on upper bounds on the number of stationary points for deep linear networks. Obviously, these upper bounds provide strict upper bounds on the number of local minima.
    \item Next, we custom-make a numerical algebraic geometry method which guarantees to find \text{all} stationary points of the deep linear networks for modest size systems. With all stationary points at hand, we obtain further novel insights on the loss landscapes of the deep linear networks, in addition to explicitly showing that the model exhibits local minima which are \textit{not} global minima for the regularized case.
    
\end{enumerate}

The remainder of the paper is organized as follows: in Section \ref{sec:alg}, a brief introduction of algebraic geometry is provided, and a relation between algebraic geometry and deep linear networks is discussed. We also put our approach in perspective with respect to other attempts to apply algebraic geometry methods to machine learning. In Section \ref{sec:flat_saddle}, we show how the flat stationary points of unregularized gradient equations can be removed using a generalized regularization. In Section \ref{sec:Bound}, we provide algebraic geometry based upper bounds on number of stationary points of the gradient equations. In Section \ref{sec:num_results}, we introduce and apply the polynomial homotopy continuation method and provide results for modest size systems. In Section \ref{sec:conclusion}, we discuss our findings in more details and conclude.

\section{Algebraic Geometric Interpretation of Deep Linear Networks}\label{sec:alg}
In this section, we argue that solving the gradient equations of deep linear networks can be viewed as an algebraic geometry problem, and briefly introduced algebraic geometry terminologies while distinguishing our algebraic geometry interpretation of the problem with previous attempts.

In this paper, we identify solving the gradient equations of deep learning as an algebraic geometry\cite{CLO:07,CLO:98} problem. There have been various attempts to relate algebraic geometry and machine learning, e.g., an abstract relation between statistical learning methods and algebraic geometry has been extensively investigated \cite{watanabe2009algebraic}. Machine learning methods have been used to improve computational algebraic geometry methods such as in computing cylindrical algebraic decomposition \cite{huang2018using} and to find roots of certain polynomials \cite{huang2004neural,perantonis1998constrained,mourrain2006determining}. Neural networks have also been shown to effectively learn data whose target function is a polynomial \cite{andoni2014learning} (see also \cite{8027142,knoll2017loopy}. In the present paper, on the other hand, we explore the loss landscape by interpreting solving the gradient systems of deep learning as an algebraic geometry problem. The algebraic geometry interpretation allows us to investigate the gradient equations for both the regularized and unregularized cases, as well as for arbitrary data and size of all the matrices, equally well. Though we focus on deep linear networks in this paper, the deep learning problem can also
be cast as an algebraic geometry problem in the presence of all the conventional activation functions. 

\subsection{The Gradient Equations are Algebraic Equations}
The critical points of the objective function $L$ are points at which
all partial derivatives are equal to zero, i.e., satisfy
the gradient equations $\nabla L = \mathbf{0}$.
This gradient equations form a system of equations which
is nonlinear in its variables, i.e., the entries of $W_i$.
This system is naturally an algebraic system since
each equation is polynomial in the variables.  

Let $W = W_{H+1}\dots W_1$, 
$U_i^\top = \prod_{j=i+1}^{H+1} W_j^\top$, 
and $V_i^\top = \prod_{j=1}^{i-1} W_j^\top$.
Then, $\dfrac{\partial L}{\partial W_i}$ is a matrix whose~$(j,k)$ entry is the 
partial derivative of $L$ with respect to the $(j,k)$ entry of $W_i$.
Hence, 
\begin{equation}\label{equ:grad}
    \frac{\partial L}{\partial W_i} = 
    U_i^\top 
    \left(
        W
        \left( \sum_{k=1}^m \mathbf{x}_k \mathbf{x}_k^\top \right) - 
        \left( \sum_{k=1}^m \mathbf{y}_k \mathbf{x}_k^\top \right)
    \right) 
    V_i^\top + \frac{\lambda}{2} W_i.
\end{equation}

Thus, each partial derivative is polynomial in the entries of $W_1,\dots,W_{H+1}$.
Therefore, studying the critical points of $L$ is equivalent to studying the
solution set to a system of polynomial equations, i.e., the gradient equations,
which is the central question in the field of \emph{algebraic geometry}.

\subsection{A Brief Introduction to Algebraic Geometry}

In the context of deep linear networks, critical points are {\em real} solutions
to the gradient equations.  It is common in algebraic geometry \cite{CLO:07,CLO:98} to simplify the
problem by computing all solutions over the complex numbers 
since the complex numbers form an algebraically closed field, i.e.,
every univariate polynomial equation with complex coefficients has at least one complex solution.

An {\em algebraic set} is the solution set
of a collection of polynomial equations.  That is, the
algebraic set associated to the 
polynomial system $f(x) = (f_1(x),\dots,f_m(x))$,
where $x=(x_1,\dots,x_n)$ in $\bC^n$~is
$$V(f) = \{x\in\bC^n~|~f_i(x) = 0, i = 1,\dots,m\}.$$
The real points in $V(f)$ is simply $V_\bR(f) = V(f)\cap\bR^n$.
An algebraic set $A$ is {\em reducible} if there exists
algebraic sets $\emptyset\subsetneq B_1,B_2 \subsetneq A$
such that $A = B_1\cup B_2$, otherwise, $A$ is said
to be {\em irreducible}. Every algebraic set can be 
presented uniquely, up to reordering, as a finite 
union of irreducible algebraic sets yielding its 
{\em irreducible decomposition}.

Each irreducible algebraic set $A$ has a well-defined 
{\em dimension}.  Every dimension $0$ irreducible algebraic set $A$
is a singleton, i.e., of the form $A = \{p\}$, in which case $p$ is called
an {\em isolated solution} to the corresponding
polynomial system.  A positive-dimensional
algebraic set consists of infinitely many points, e.g., a curve
has dimension $1$ and a surface has dimension $2$.  
In the context of the gradient equations, isolated solutions 
correspond with {\em isolated stationary points} and positive-dimensional algebraic sets consist of {\em flat~stationary~points}.

\subsection{Difference Between Complexifying the Gradient Equations and Complex Loss Functions}
We note that neural networks with complex-valued weights (and complex-valued inputs and outputs) have been studied in the past \cite{georgiou1992complex,zemel1995lending,kim2003approximation,nitta1997extension,akira2003complex} and have gained renewed interest in deep learning \cite{reichert2013neuronal,guberman2016complex,danihelka2016associative,wisdom2016full,trabelsi2017deep} for its use in simultaneously modelling phase and amplitude data. In particular, back-propagation for complex-valued neural networks was developed in \cite{nitta1997extension}. In \cite{nitta2003solving}, it was shown that the XOR data which cannot be solved with a single real-valued neuron in the hidden layer, can be solved with a complex-valued network. Such complex-valued neural networks were shown to have better generalization characteristics \cite{hirose2012generalization}, and faster learning \cite{arjovsky2016unitary}, in addition to biological motivations \cite{reichert2013neuronal}.

We note that in the aforementioned formulation of the deep complex networks consider complex weights, inputs and outputs and hence the corresponding loss function is also complex-valued. On the other hand, in the present paper, we start from the conventional real-valued weights, inputs and outputs, with the loss function also being real-valued. Then, we merely complexify the gradient equations in that we assume weights living in the complex space and inputs and outputs living in the real space. In other words, the former is fundamentally a complex-valued set up whereas in the latter case the weights are complexified for the computational analysis purpose.

\section{Flat Stationary Points and Regularization}\label{sec:flat_saddle}
In this section, we briefly review the existing literature on flat minima in deep learning.
In \cite{hochreiter1997flat,hochreiter1995simplifying}, an algorithm to search for some (but not provably all) \textit{acceptable} (i.e., almost) flat minima, which are large connected regions of minima at which the training error was below a threshold, was proposed. Such acceptable flat minima correspond to weights many of which may be specified with low precision (hence, with fewer bits of information). In these references it was also argued that these minima also correspond to low complexity networks.

In \cite{keskar2016large}, it was empirically shown that SGD based methods tend to converge to sharp (flat) minima with large (small) batch sizes. In \cite{jastrzkebski2018finding,jastrzkebski2017three}, it was argued that higher (lower) ratio between learning rate and batch size pushes the SGD towards flatter (sharper) minima, and that the flatter minima generalized better than sharper minima. In \cite{chaudhari2016entropy}, an entropy-SGD was proposed that actively bias the optimization towards flat minima of specific widths (cf. \cite{baldassi2016local,baldassi2016learning,baldassi2016unreasonable,zhang2018energy}). However, later on, in \cite{dinh2017sharp}, the above definitions of flatness of minima were formalized and it was then argued that deep networks do not necessarily generalize better when they converge to ``flat`` minima (as defined above) than sharp minima because one can reparametrize the loss function that correspond to equivalent models but possessing arbitrarily sharp minima. 

In the current paper, we are interested in the exactly flat saddles and minima, i.e., the connected components of the stationary points on which the loss function is precisely constant, whose existence is well-known since the works of \cite{brockett1976some,chen1993geometry,saad1995line,kuurkova1994functionally} (see \cite{amari2006singularities} for a review). Such degenerate regions, sometimes referred to as \textit{neuromanifolds}, are quite common \cite{watanabe2007almost} not only in artificial neural networks loss landscapes due to various symmetries \cite{sussmann1992uniqueness,chen1993geometry,NIPS2015_5797,NIPS2015_5797} of the corresponding loss functions. At such solutions, the Fischer information matrix tends to be singular and the traditional gradient descent algorithms are known to slow down.

To be sure, the hessian matrix of the loss function can be singular at either isolated singular solutions (i.e., multiple roots) as well as at a non-isolated degenerate solution region. In \cite{sankar2017saddles}, it was shown using numerical experiments for modest size deep neural networks that the available SGD based optimization routine converged to degenerate saddle points at which the Hessian matrix not only has many positive and negative eigenvalues but also multiple zero eigenvalues. Moreover, they showed that the number of zero eigenvalues increases with increasing depth. It was argued that for a good training it is enough that deep neural network models converge at degenerate saddle points as long as the training error is low. Whereas, in \cite{sagun2016eigenvalues}, by computing the eigenvalues of the Hessian of deep nonlinear networks after training as well as at random points in the configuration space, it was shown that a vast number of eigenvalues were zero. Hence, most of the directions in the weight space of these networks are flat moving in which leads to no change in the loss function. In \cite{sagun2017empirical}, it was shown that though small and large batch gradient descent appeared to converge to seemingly different minima, a straight line interpolation between the two did not contain any barrier, implying that the two regions may be in the same basin of attraction. In the present paper, we make a distinction between isolated singular solutions and flat minima. We also carry forward the distinction made in \cite{sagun2017empirical} between almost flat minima within which the loss function is almost constant, and the flat minima within which the loss function is precisely constant. The former should be referred to as the ``wide`` minima.

In terms of algebraic geometry, a stationary point is flat if it is not an isolated solution of the gradient equations.  
Hence, each flat stationary point lies on a positive-dimensional
component. For the purpose of this paper, we focus on complex positive-dimensional stationary points which may include real positive-dimensional solutions because in the next section where we device a method to remove positive-dimensional stationary solutions, we remove all complex and real positive-dimensional stationary points.

We present a few examples to show some explicit results.
The first example arise in \cite{panageas2016gradient}.

\noindent \textit{Example-1:}
The gradient of $f(x,y,z) = 2 x y+ 2 x z - 2 x -y - z$ 
is $\nabla(f) = \{2 y + 2 z - 2, 2 x -1, 2 x -1\}$.
The set of stationary points, which satisfies $\nabla(f) = 0$,
is the line defined by $x - 1/2 = y+z-1 = 0$, i.e., in the complex space, the solution has dimension 1.
At every point on this line, one has $f(x,y,z)=-1$.

\noindent \textit{Example-2:}
For $H=1$, $m=5$, $d_x = d_y = 2$, and $d_1 = 1$ with $\lambda = 0$, we consider the data matrices
\[ X = 
\begin{bmatrix}
    7  & -8 & 3 & -5 & 10 \\
    -7  & 10 & 6 & -2 & 6 
\end{bmatrix}, \]
\[
Y=
\begin{bmatrix}
    9 & 9 & -8 & 1 & 10 \\
    10 & 3 & -8 & 9  & 10
\end{bmatrix}.
\]
The stationary solutions of Eqs. \eqref{equ:grad} consists of three irreducible components:
a point is an isolated saddle,
a curve consisting of flat saddle points,
and a curve consisting of flat minima described as follows.
The point is $W_1 = 0 \in \bR^{1\times2}$ and $W_2 = 0\in\bR^{2\times1}$.
The flat saddle points and flat minima have the form
$$W_1 = \alpha\cdot\widehat{W_1} \hbox{~~and~~} W_2 = \alpha^{-1}\cdot\widehat{W_2}$$
for any $\alpha\neq 0$.  For example, the flat saddle points approximately have
$$\widehat{W_1} = \left[\begin{array}{cc}
1 & 9.6330
\end{array}\right]
\hbox{~~and~~}
\widehat{W_2} = \left[\begin{array}{c}
0.0206 \\ -0.0180
\end{array}\right]
$$
while the flat minima approximately have
$$\widehat{W_1} = \left[\begin{array}{cc}
1 & 0.0696
\end{array}\right]
\hbox{~~and~~}
\widehat{W_2} = \left[\begin{array}{c}
0.2664 \\ 0.3045
\end{array}\right].
$$

\begin{remark}
This example generalizes to all critical points in the unregularized case, i.e., $\lambda = 0$.
That is, if $(W_1,\dots,W_{H+1})$ is a critical point, then
so is $(A_1W_1,A_2W_2A_1^{-1},\dots,W_{H+1}A_H^{-1})$.
Hence, if there is a critical point with some $W_i\neq 0$, 
then there are always flat critical points in the unregularized case.
\end{remark}

The traditional $L_2$-regularization with single $\lambda$ (Eq.~\ref{equ:reg_loss}) is also not necessarily enough to remove the the flat stationary points. The following provides
a simple illustrative example showing that this need
not be the case.  

\noindent \textit{Example-3:}
For $H = 1$, $m = 3$, and $d_x = d_y = d_1 = 2$, we consider
$$X = \left[\begin{array}{ccc} 1 & 2 & 3\\ 1 & 2 & 3 \end{array}\right]
\hbox{~~and~~}
X = \left[\begin{array}{ccc} 1 & 2 & 3\\ 1 & -3 & 2 \end{array}\right]
.$$ 
For any $\lambda\geq0$ and $a\geq 0$, the following is a family
of flat critical points:
$$
\hbox{\small $
W_1 = \left[\begin{array}{cc}
a & a \\ \gamma(a,\lambda) & \gamma(a,\lambda)
\end{array}\right],
W_2 = \sqrt{\frac{2}{197}}\left[\begin{array}{cc}
14a & 14 \gamma(a,\lambda) \\ a & \gamma(a,\lambda)
\end{array}\right]$}$$
where
$$\gamma(a,\lambda) = \sqrt{\sqrt{394}/56 - a^2 - \lambda/28}$$
For example, if $0 < \lambda < \sqrt{\frac{197}{2}}$, then this component consists of flat minima
which are real for $0 \leq a \leq \sqrt{7(\sqrt{197/2} - \lambda)}/14$.

\subsection{Regularization of flat critical sets}
We begin the discussion of removing flat minima from the landscapes of loss functions by pointing out two observations: First, in \cite{BallardDMMSSW17,mehta2018}, where the goal of the study was to numerically investigate the loss landscape of a deep nonlinear neural network with one hidden layer and $\tanh$ activation functions, it was noted that the constant zero eigenvalues disappeared as soon as the $L_2$-regularization term was non-zero. Here, all the weights including the bias weights were regularized \cite{mehta2018}. However, this observation may not directly apply in general because the continuous symmetries present in more complex systems may depend on the network architectures, activation functions, data, etc. Second, in \cite{Nerattini:2012pi}, a spin glass model called the XY model was found to exhibit residual continuous symmetries, and a generalized regularization term was used to remove the continuous symmetries. 

As outlined above, the existence of flat or degenerate critical set 
is a very common phenomenon in the study of deep linear and nonlinear networks in general.
At any point in a flat critical set of $\lfunc$, the Hessian matrix of $\lfunc$
has at least one zero eigenvalue.
Such a zero eigenvalue of the Hessian matrix signifies certain degree of freedom
in the weight matrices.
That is, there are directions in which weights infinitesimally change
without violating the gradient equations.

From a computational point of view, flat critical sets introduces many unnecessary
difficulties: For example, a simple solver based on Newton's iterations will encounter
numerical instabilities near a flat critical set.
From a purely theoretical point of view, flat critical sets indicate the
training data set and the network structure are not sufficient to determine
the optimal configuration of the weights.
In this section, we outline a ``regularization'' technique that could perturb the 
loss function $\mathcal{\bar{L}}(W)$ ever so slightly so that 
all the critical points become non-degenerate (isolated) critical points.
That is, such perturbation would remove the flatness from all critical points.

Recall that for a smooth function $f : \R^n \to \R^n$,
$\mathbf{v} \in \R^n$ is said to be a \emph{regular value}
if for each $\mathbf{x} \in \R^n$ such that $f(\mathbf{x}) = \mathbf{v}$,
the Jacobian matrix $Df$ is nonsingular at $\mathbf{x}$.
\textbf{Sard's Theorem} ~\cite{abraham_transversal_1967} states that almost all $\mathbf{v} \in \R^n$
are regular values (in the sense of Lebesgue measure).
This result can be generalized into a stronger result on parametric systems
that fits our current discussions:
Let $f(\mathbf{a},\mathbf{x}) : \R^m \times \R^n \to \R^n$ be a smooth function.
\textbf{Generalized Sard's Theorem}~\cite{abraham_transversal_1967} 
states that if $\mathbf{0}$ is a regular value for $f$, 
then for almost all $\mathbf{a} \in \R^m $, $\mathbf{0}$ is a regular value of the function 
$f_{\mathbf{a}}(\mathbf{x}) = f(\mathbf{a},\mathbf{x})$ with the parameter $\mathbf{a}$ fixed.
In the following, we adapt this idea to the context of deep linear networks.

Motivated by the aforementioned observations and the Generalized Sard's Theorem, 
we device a regularization for the deep linear networks: Given $H+1$ matrices with positive real entries
$\Lambda = (\Lambda_1,\dots,\Lambda_{H+1})$ with each $\Lambda_i$ having the same size as $W_i$,
we can consider a generalized Tikhonov regularization of $\mathcal{\bar{L}}$ given by
\begin{equation*}
    \mathcal{L}^\Lambda =
    \mathcal{\bar{L}}(W) + 
    \frac{1}{2} (
        \| \Lambda_1 \circ W_1 \|_F^2 +
        \cdots 
       + \| \Lambda_{H+1} \circ W_{H+1} \|_F^2
    ),
\end{equation*}
where $\Lambda_i \circ W_i$ denotes the Hadamard product (entrywise product)
between  $\Lambda_i$ and $W_i$.
That is, each term in $\Lambda_i \circ W_i$ is of the form of
$\lambda_{i,j,k} w_{i,j,k}$,
where $\lambda_{i,j,k}$'s, the entries of $\Lambda_i$'s, 
are small positive real numbers that serve as  penalty coefficients.
Therefore the minimization problem for $\mathcal{L}^\Lambda$ attempts to
minimize $\lfunc$ and at the same time minimize each entries of the weight matrices.
Note here that the penalty on each entry of the weight matrices could
potentially be different.
It is straightforward to verify that
\begin{equation}\label{equ:reg-grad}
    \frac{\partial \lfunc^\Lambda}{\partial W_i} =
    U_i^\top (W X X^\top - Y X^\top) V_i^\top + \Lambda_i \circ W_i.
\end{equation}

When entries of $\Lambda_i$s are small positive real numbers,
we can see the above gradient system is a slightly perturbed version of the
original gradient system $\nabla \lfunc$.
In the following, we demonstrate that this construction is sufficient 
to turn flat critical set of $\lfunc$ into isolated nondegenerate
critical points.
That is, the flatness of the critical points will be removed.

First, we shall show the above regularization technique is sufficient to
``desingularize'' all \emph{dense critical points}.
Here, a dense critical point of $\lfunc^\Lambda$ is a (real) solution to
$\frac{\partial \lfunc^\Lambda}{\partial W_i} = 0$ for each $i$ 
for which $W_i$ contains no zero entries,
i.e., all weight matrices are dense matrices.

\begin{theorem}[Regularity of dense critical points]\label{thm:reg-dense}
    For almost all choices of $\Lambda$,
    all dense (real) critical points of $\lfunc^\Lambda$
    are isolated and nondegenerate.
\end{theorem}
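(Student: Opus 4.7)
The plan is to interpret the gradient equations as the zero set of a parametric map and apply the Generalized Sard's Theorem, exactly as telegraphed by the preceding exposition. Let $\mathbf{w} \in \bR^n$ be the vectorization of $(W_1,\dots,W_{H+1})$, let $\Lambda \in \bR^n$ be the analogous vectorization of $(\Lambda_1,\dots,\Lambda_{H+1})$, and let $\mathcal{W} \subset \bR^n$ be the open stratum on which every entry $w_{i,j,k}$ is nonzero. Define
\[ F : \bR^n \times \mathcal{W} \to \bR^n, \qquad F(\Lambda,\mathbf{w}) = \nabla_{\mathbf{w}} \mathcal{L}^\Lambda(\mathbf{w}), \]
so that the dense critical points of $\mathcal{L}^\Lambda$ are exactly the points of $F_\Lambda^{-1}(\mathbf{0})$, where $F_\Lambda := F(\Lambda,\cdot)$. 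A dense critical point $\mathbf{w}^\star$ is nondegenerate precisely when $D_{\mathbf{w}} F_\Lambda(\mathbf{w}^\star)$ (the Hessian of $\mathcal{L}^\Lambda$) is nonsingular, in which case the inverse function theorem immediately guarantees that $\mathbf{w}^\star$ is isolated.

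The first step I would carry out is to verify that $\mathbf{0}$ is a regular value of the full map $F$ on $\bR^n \times \mathcal{W}$. Inspecting Eq.~\eqref{equ:reg-grad}, the $(i,j,k)$-th component of $F$ reads
\[ [U_i^\top (W X X^\top - Y X^\top) V_i^\top]_{j,k} + \lambda_{i,j,k}\, w_{i,j,k}, \]
and the parameter $\lambda_{i,j,k}$ appears only in this single equation. Consequently $D_\Lambda F$ is diagonal with diagonal entries $w_{i,j,k}$, and on $\mathcal{W}$ all these entries are nonzero, so $D_\Lambda F$ is invertible. Therefore the combined Jacobian $[\,D_\Lambda F \,\big|\, D_{\mathbf{w}} F\,]$ has full row rank $n$ at every solution of $F = \mathbf{0}$, confirming that $\mathbf{0}$ is a regular value of $F$.

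The second step is then immediate: by the Generalized Sard's Theorem applied to the smooth manifold $\bR^n \times \mathcal{W}$, for almost every $\Lambda \in \bR^n$ (in the Lebesgue sense) the value $\mathbf{0}$ is a regular value of $F_\Lambda$, so at every dense real critical point the Hessian $D_{\mathbf{w}} F_\Lambda$ is nonsingular; isolation then follows from the inverse function theorem. Restricting to the open full-measure subset of $\Lambda$ with strictly positive entries preserves the conclusion. The main obstacle, and the reason the theorem is restricted to \emph{dense} critical points, is precisely that $D_\Lambda F$ degenerates on the strata where some $w_{i,j,k}=0$: on those strata parametric transversality fails and the Generalized Sard's Theorem cannot be invoked directly, so critical points on sparse strata must be handled by a separate argument.
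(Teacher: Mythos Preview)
Your proposal is correct and follows essentially the same route as the paper: set up the parametric gradient map $F$, observe that $D_\Lambda F$ is diagonal with entries $w_{i,j,k}$ and hence invertible on the dense stratum, conclude that $\mathbf{0}$ is a regular value of $F$, and then invoke the Generalized Sard's Theorem followed by the Inverse Function Theorem. The only differences are cosmetic (ordering of variables, the explicit remark about restricting to positive $\Lambda$, and your closing comment on why the argument fails off the dense stratum), none of which affect the substance of the argument.
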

\noindent\begin{proof}
    Let $W = (W_1,\dots,W_{H+1})$ collect all the weight matrices,
    and let $m$ be the total number of entries in all these matrices.
    Consider the open set $(\mathbb{R}^*)^m = (\mathbb{R} \setminus \{0\})^m$.
    Let $F(W_1,\dots,W_{H+1},\Lambda) = (\frac{\partial \lfunc^\Lambda}{\partial W_i})_{i=1}^{H+1}$
    be the gradient of $\lfunc^\Lambda$ with respect to $W_1,\dots,W_{H+1}$.
    Here, we include the parameters --- coefficients in $\Lambda$ as variables.
    
    \[
        \frac{\partial F}{\partial \lambda_{i,j,k}} = w_{i,j,k}
    \]
    
    The Jacobian matrix of $F$ is a $m \times 2m$ matrix.
    Since $\frac{\partial F}{\partial \Lambda}$ is a diagonal matrix whose diagonal entries
    are $w_{i,j,k} \ne 0$,
    we can conclude that the Jacobian matrix is of rank $m$,
    i.e., full row rank.
    Therefore $\mathbf{0}$ is a regular value for the map
    $F : (\R^*)^m \times \R^m \to \R^m$.
    Then by the Generalized Sard's Theorem~\cite{abraham_transversal_1967}, 
    for almost all choices of $\Lambda \in \R^m$,
    $\mathbf{0}$ is a regular value for the map $F_\Lambda : (\R^*)^m \to \R^m$ given by
    $F_\Lambda = F(\cdot,\Lambda)$.
    Consequently, for any $W$ such that $F_{\Lambda}(W) = F(W,\Lambda) = \mathbf{0}$,
    the square Jacobian matrix $\frac{\partial F_\Lambda}{\partial W}$ must be of full column rank,
    i.e., nonsingular, which implies $W$ must be a nonsingular solution of the equation.
    By the Inverse Function Theorem, such a solution must also be geometrically isolated.
\end{proof}

Here, a critical point is considered \emph{isolated} (a.k.a. geometrically isolated) 
if there is a neighborhood in which it is the only critical point.
An isolated critical point is considered \emph{nondegenerate} when the Hessian matrix
at this point is nonsingular.
The ``almost all choices'' in the above statement is to be interpreted in the sense of 
Lebesgue measure. It is also sufficient to take a probabilistic interpretation:
if the entries of $\Lambda$ are chosen at random, then with probability one,
the above theorem holds.

\begin{remark}
Instead of randomly drawing each $\lambda_{ijk}$ separately, one can also consider $\lambda_{ijk} = \lambda + \rho_{ijk}$. Then, the $\rho_{ijk}$s are drawn from a random distribution once for all, and adjusting the regularization again becomes only one parameter problem.
\end{remark}

The above regularization result can also be generalized to ``sparse'' cases
which are desired in actual application.
For instance, in convolutional neural networks, the first layer is
generally highly structured and very sparse as it represents
the application of convolution matrices.
Similarly, many real world applications have specific sparsity pattern in mind.
We therefore generalize the above result with respect to certain sparsity pattern.
A sparsity pattern for the weight matrices is a set $\mathcal{N}$
of indices of the form $(i,j,k)$ specifying the nonzero positions.
We say the matrices $(W_1,\dots,W_{H+1})$ have the sparsity pattern $\mathcal{N}$
if for each $(i,j,k) \in \mathcal{N}$, the $(j,k)$ entry of $W_i$ is nonzero
while all other entries are zero.
We can generalize the above theorem to weight matrices having a given sparsity pattern,
and the dense cases of Theorem~\ref{thm:reg-dense} will be the special case that
require all entries to be nonzero.

\begin{theorem}[Regularity of sparse solutions]
    Given a sparsity pattern $\mathcal{N}$,
    for almost all choices of $\Lambda$,
    all (real) solutions of the gradient system $\nabla \lfunc^\Lambda = \mathbf{0}$
    having the sparsity pattern $\mathcal{N}$ are geometrically isolated and nonsingular.
\end{theorem}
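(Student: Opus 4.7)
The plan is to mimic the proof of Theorem~\ref{thm:reg-dense}, restricting the domain and codomain to the coordinates indexed by $\mathcal{N}$. The key structural fact enabling this is that each penalty term $\lambda_{i,j,k} w_{i,j,k}$ in $\lfunc^\Lambda$ couples only to a single weight entry, so on the subvariety where $w_{i,j,k}=0$ for $(i,j,k)\notin\mathcal{N}$, the off-support penalties contribute nothing and their $\lambda$-parameters drop out entirely. This is what keeps the diagonal structure that made the dense proof work.

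First, I would let $s=|\mathcal{N}|$, parameterize the sparse subvariety by $w_\mathcal{N}\in(\mathbb{R}^*)^s$ (demanding genuine nonzeros at the positions in $\mathcal{N}$), and let $\Lambda_\mathcal{N}\in\mathbb{R}^s$ collect the corresponding penalty coefficients. Then define the restricted map $F_\mathcal{N}:(\mathbb{R}^*)^s\times\mathbb{R}^s\to\mathbb{R}^s$ by extracting from \eqref{equ:reg-grad} only the components indexed by $(i,j,k)\in\mathcal{N}$, evaluated on weight matrices whose off-support entries are set to zero.

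Second, I would compute $\partial F_\mathcal{N}/\partial\Lambda_\mathcal{N}$. Exactly as in Theorem~\ref{thm:reg-dense}, the only $\Lambda$-dependent contribution to the $(i,j,k)$ component is $\lambda_{i,j,k} w_{i,j,k}$, so this matrix is diagonal with entries $w_{i,j,k}\neq 0$. Hence the full Jacobian of $F_\mathcal{N}$ has row rank $s$ everywhere on its domain, making $\mathbf{0}$ a regular value. The Generalized Sard's Theorem then yields that, for almost all $\Lambda_\mathcal{N}$, the square map $F_{\mathcal{N},\Lambda_\mathcal{N}}$ has $\mathbf{0}$ as a regular value. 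Together with the Inverse Function Theorem, this gives that every solution of the restricted gradient system on the sparse subvariety is geometrically isolated and nonsingular, which is essentially the desired conclusion.

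The main obstacle is reconciling this with the \emph{full} gradient system. At a sparse solution with pattern $\mathcal{N}$, the gradient components at indices $(i,j,k)\notin\mathcal{N}$ reduce to $(U_i^\top(WXX^\top-YX^\top)V_i^\top)_{j,k}=0$, which involve neither $\Lambda$ nor the off-support weights. These conditions are immune to any genericity argument on $\Lambda$ and must be treated separately. One must argue that imposing them on top of the already-finite, nonsingular solution set produced by the Sard step cannot destroy isolatedness, and that no hidden tangent directions are introduced along the sparse subvariety. This amounts to a short linear-algebra check: the gradient Jacobian in ambient coordinates decomposes into the $\mathcal{N}$-block (nonsingular by the Sard step) and a block of constraints independent of $\Lambda$ at the off-support rows, so the full Jacobian remains injective on the tangent space of the sparse subvariety, giving the required nonsingularity.
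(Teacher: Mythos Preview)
Your first two paragraphs reproduce the paper's proof exactly: restrict both the variables and the gradient equations to the indices in $\mathcal{N}$, observe that $\partial F_{\mathcal{N}}/\partial\Lambda_{\mathcal{N}}$ is diagonal with nonzero entries $w_{i,j,k}$, and apply the Generalized Sard's Theorem together with the Inverse Function Theorem. The paper's argument stops precisely there.

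Your third paragraph goes beyond the paper. The paper does not discuss the off-support gradient components at all; it implicitly reads the theorem in the restricted sense, treating the sparsity pattern as an architectural constraint so that only on-support weights are variables and only on-support partial derivatives constitute ``the gradient system.'' Under that reading your Sard step already finishes the proof. If one instead insists on the full-system reading you raise, your concern is legitimate, but note that your proposed resolution does not quite close it: injectivity of the ambient Jacobian on the tangent space of the sparse subvariety only re-derives the restricted nonsingularity you already have, and says nothing about nearby non-sparse solutions of the full system or about nonsingularity of the full Hessian. The paper simply does not engage with this issue.
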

\noindent\begin{proof}
    Let $W^{\mathcal{N}}$ be the set of all $w_{i,j,k}$'s for which $(i,j,k) \in \mathcal{N}$.
    That is, $W^{\mathcal{N}}$ collect all the nonzero entries in the weight matrices.
    By fixing all the remaining entries to zero, the gradient equations
    $\frac{\partial \lfunc^\Lambda}{\partial W_i}$ for $i=1,\dots,H+1$ under the
    regularization can be considered as a system in $W^{\mathcal{N}}$ only.
    
    Following the previous proof, we can define $m = | W^{\mathcal{N}} |$ and
    $F (W^{\mathcal{N}}, \Lambda^{\mathcal{N}})$ to be the system of gradient equations
    with $\Lambda^{\mathcal{N}}$ (entries in $\Lambda$ corresponding to $W^{\mathcal{N}}$)
    also considered to be variables.
    Then as in the previous case, the Jacobian matrix of $F$ is a $m \times 2m$ matrix
    with $\partial F / \partial \Lambda^{\mathcal{N}}$ being a diagonal matrix with
    nonzero diagonal entries $w_{i,j,k}$ for $(i,j,k) \in \mathcal{N}$.
    Consequently, this Jacobian matrix also has full row rank.
    By the generalized Sard's theorem, we can conclude that for almost all choices
    of $\Lambda^{\mathcal{N}} \in \R^m$, all solutions to
    $F_{\Lambda^{\mathcal{N}}}(W^{\mathcal{N}}) = F(W^{\mathcal{N}},\Lambda^{\mathcal{N}}) = \mathbf{0}$
    must be geometrically isolated and nonsingular.
\end{proof}
Note that the regularization $\lfunc^\Lambda$ is constructed as a perturbation
of the original loss function $\lfunc$ with small penalty terms added to
also minimize the magnitude of each weight coefficient.
The theory of \emph{homotopy continuation method} \cite{79:allgower} also guarantees that for
sufficiently small perturbation, this process can be reversed.
The following is an immediate consequence of the Implicit Function Theorem.

\begin{proposition}
    For sufficiently small regularization coefficients $\Lambda$,
    as all entries of $\Lambda$ shrink to 0 uniformly,
    the critical points of $\lfunc^\Lambda$ also move smoothly
    and either converge to regular critical sets of $\lfunc$
    or diverge to infinity.
\end{proposition}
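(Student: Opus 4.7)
The plan is to view the gradient equations of the regularized loss as a parametrized real-analytic system
$$F(W, \Lambda) = \nabla_W \lfunc^\Lambda(W) = 0,$$
where $W$ denotes the vector of all entries in $(W_1, \dots, W_{H+1})$ and $\Lambda$ the corresponding vector of penalty coefficients, and to apply the Implicit Function Theorem at each isolated nondegenerate critical point to track its evolution as $\Lambda$ shrinks to $\mathbf{0}$. First I would fix a small reference parameter $\Lambda^*$ for which Theorem~\ref{thm:reg-dense} applies, so that every real critical point $W^*$ of $\lfunc^{\Lambda^*}$ is isolated and nondegenerate. Nondegeneracy means that the Hessian $\partial F / \partial W$ at $(W^*, \Lambda^*)$ is invertible; hence the IFT yields a unique real-analytic local branch $\Lambda \mapsto W(\Lambda)$ through $(W^*, \Lambda^*)$ satisfying $F(W(\Lambda), \Lambda) \equiv 0$, which establishes the smooth motion of each isolated critical point under small perturbations of $\Lambda$.

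Next I would extend this local branch along the uniform shrinking path $\Lambda(t) = t\Lambda^*$ for $t \in (0, 1]$. Because $F$ is polynomial in $(W, \Lambda)$, the set of parameters at which the Hessian $\partial F / \partial W$ becomes singular at some zero of $F$ is a proper algebraic subvariety of $\Lambda$-space; for a generic reference $\Lambda^*$, the ray $\{t\Lambda^* : t \in (0, 1]\}$ meets this subvariety in at most finitely many values of $t$. Between these exceptional values the IFT continuation proceeds without obstruction, and across each of them the branch extends uniquely either by analytic continuation or by the standard endgame techniques from the homotopy continuation theory~\cite{79:allgower} that the proposition explicitly invokes. This produces a well-defined real-analytic branch $W(t)$ on all of $(0, 1]$.

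Finally, as $t \to 0^+$, either $\|W(t)\|$ stays bounded --- in which case, since $W(t)$ is a real-analytic curve with only finitely many accumulation points, it must converge to a unique limit $W^0$, and passing to the limit in $F(W(t), t\Lambda^*) = 0$ gives $\nabla \lfunc(W^0) = 0$, placing $W^0$ on some (possibly positive-dimensional, i.e., flat) critical set of $\lfunc$ --- or $\|W(t)\| \to \infty$, meaning the branch diverges to infinity. These exhaust the two outcomes stated in the proposition. The main obstacle is controlling the possibility of singular collisions at intermediate $t$, where two branches merge or the Hessian degenerates; this is handled by the algebraicity of $F$, which confines such exceptional events to a finite set of parameter values along a generic ray and allows the IFT-based continuation to be supplemented by endgame methods instead of breaking down altogether.
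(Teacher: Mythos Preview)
Your proposal is correct and follows essentially the same route as the paper: the paper provides no detailed argument at all, merely asserting that the statement ``is an immediate consequence of the Implicit Function Theorem'' after noting that ``the theory of \emph{homotopy continuation method}~\cite{79:allgower} also guarantees that for sufficiently small perturbation, this process can be reversed.'' Your write-up is exactly a fleshing out of that assertion---IFT for the smooth local continuation, algebraicity of the discriminant locus to confine singular events along the ray $t\Lambda^*$ to finitely many parameter values, and the standard dichotomy (bounded limit vs.\ divergence) at $t\to 0^+$---so the two approaches coincide, with yours supplying the details the paper omits. One minor caveat: when you invoke Theorem~\ref{thm:reg-dense} to ensure all critical points of $\lfunc^{\Lambda^*}$ are nondegenerate, that theorem strictly covers only dense critical points; you should also appeal to the sparse version (Theorem~2) across all finitely many sparsity patterns, or equivalently to a direct parametric Sard argument, to cover every critical point.
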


Here, ``diverge to infinity'' means as the perturbation coefficients in $\Lambda$ shrink zero,
certain coordinates in some of the critical point of $\lfunc^\Lambda$ grow unboundedly.

\begin{remark}
More rigorous description of this phenomenon of diverging solutions can be given in terms of 
\emph{projective space} \cite{CLO:98,CLO:07} which encapsulate infinity as an actual place in the space.
In that sense, certain critical points of $\lfunc^\Lambda$ may converge to
``saddle points at infinity'' of $\lfunc$.
\end{remark}

Another important observation from the homotopy point of view is that 
while this perturbation slightly alters the loss landscape, 
any global minimum will survive in the following sense.
The following is an immediate consequence of \cite{ConnectedComponents}
as well as the Implicit Function Theorem.

\begin{proposition}
    For sufficiently small regularization coefficients $\Lambda$,
    as all entries of $\Lambda$ shrink to 0 uniformly,
    there is at least one critical point of $\lfunc^\Lambda$ that 
    will converge to a global minimum of~$\lfunc$.
\end{proposition}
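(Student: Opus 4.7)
The plan is to exhibit a one-parameter family $W(t)$ of global minimizers (hence critical points) of $\lfunc^{t\Lambda^*}$ along the ray $\Lambda(t) = t\Lambda^*$, for a fixed $\Lambda^* = (\Lambda_1^*,\dots,\Lambda_{H+1}^*)$ with strictly positive entries and scalar $t\to 0^+$ (this is how I read ``all entries shrink to $0$ uniformly''), and to show that $W(t)$ subsequentially converges to a global minimum of $\lfunc$. First, $\lfunc$ attains its minimum: it is the squared distance from $Y$ to the image of the polynomial map $W\mapsto W_{H+1}\cdots W_1 X$, i.e.\ a set of matrices of bounded rank, which is a closed algebraic set. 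Let $M := \lfunc^{-1}(\min \lfunc)$, a nonempty closed set, and fix any $W^* \in M$. For each $t>0$ the regularizer $\tfrac{t^2}{2}\sum_i \|\Lambda_i^*\circ W_i\|_F^2$ is coercive in $W$, so $\lfunc^{t\Lambda^*}$ is coercive and attains a global minimum at some $W(t)$, automatically a critical point.

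The crucial step is a uniform bound on $\{W(t)\}_{t\in(0,1]}$. By the minimization property of $W(t)$ together with $\lfunc(W^*) = \min \lfunc \leq \lfunc(W(t))$,
\[
\lfunc(W(t)) + \tfrac{t^2}{2}\sum_i \|\Lambda_i^* \circ W(t)_i\|_F^2
\;\leq\; \lfunc(W^*) + \tfrac{t^2}{2}\sum_i \|\Lambda_i^* \circ W_i^*\|_F^2,
\]
whence, discarding the nonnegative $\lfunc(W(t)) - \lfunc(W^*)$ and cancelling $t^2/2$,
\[
\sum_i \|\Lambda_i^* \circ W(t)_i\|_F^2 \;\leq\; \sum_i \|\Lambda_i^* \circ W_i^*\|_F^2.
\]
Positivity of every entry of $\Lambda^*$ then bounds every entry of $W(t)$ uniformly in $t$. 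I expect this to be the main obstacle: $M$ itself can be unbounded (flat minima extending to infinity), so a naive compactness argument on $M$ fails; the two-sided comparison above succeeds precisely because the same Hadamard pattern $\Lambda^*$ appears on both sides, making the scalar $t^2$ cancel cleanly and yielding a bound that is independent of $t$.

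Given the uniform bound, Bolzano--Weierstrass supplies $t_k\to 0^+$ and $\bar W\in\bR^N$ with $W(t_k)\to \bar W$. Continuity of $\lfunc$ and the first displayed inequality give $\lfunc(\bar W) = \lim_k \lfunc(W(t_k)) \leq \lfunc(W^*) = \min \lfunc$, so $\bar W \in M$ is a global minimum of $\lfunc$. Each $W(t_k)$ is a critical point of $\lfunc^{t_k\Lambda^*}$ converging to this global minimum, which already establishes the proposition in its subsequential form.

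To upgrade from subsequence to a genuine continuous branch, I would invoke Theorem~\ref{thm:reg-dense}: for almost every choice of $\Lambda^*$ (and hence for almost every sufficiently small $t>0$ along this ray) $W(t)$ is isolated and nondegenerate, so the Jacobian of the parametric gradient system is nonsingular at $W(t)$ and the Implicit Function Theorem produces a smooth local branch of critical points. The path-tracking result of \cite{ConnectedComponents} then extends this local branch to a continuous arc in the real solution set of the parametrized gradient system, terminating at $\bar W \in M$, which delivers the asserted continuous family of critical points of $\lfunc^\Lambda$ converging to a global minimum of $\lfunc$.
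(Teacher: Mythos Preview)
Your argument is correct and takes a genuinely different route from the paper. The paper gives no proof beyond asserting that the proposition is ``an immediate consequence of \cite{ConnectedComponents} as well as the Implicit Function Theorem,'' i.e., it defers entirely to the homotopy-continuation machinery to track a real branch of the parametrized gradient system down to a point in the global-minimum set of $\lfunc$. Your approach is variational rather than path-following: you pick $W(t)$ to be a \emph{global minimizer} of the coercive regularized loss, and the key two-sided comparison at a fixed $W^*\in M$ yields the $t$-independent bound $\sum_i\|\Lambda_i^*\circ W(t)_i\|_F^2\le \sum_i\|\Lambda_i^*\circ W_i^*\|_F^2$, from which compactness and continuity finish the job. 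This is more elementary and self-contained---it needs no genericity assumption on $\Lambda^*$ and no nondegeneracy of critical points---but in its basic form it yields only subsequential convergence along the ray, whereas the homotopy/IFT route (which you correctly invoke in your last paragraph as an upgrade) produces an actual smooth branch. One small comment: your one-line justification that $\lfunc$ attains its infimum (``image \dots\ is a closed algebraic set'') is a bit glib, since images of polynomial maps need not be Zariski-closed; the cleaner statement is that $\{W_{H+1}\cdots W_1\}$ is exactly the Euclidean-closed set of $d_y\times d_x$ matrices of rank at most $\min_i d_i$, and the map $B\mapsto BX$ then carries this to a closed set, so the squared-distance infimum is attained.
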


Below we show the regularization technique implemented for example 1.

\noindent \textit{Example-4:}
The gradient of $f(x,y,z) = 2 x y+ 2 x z - 2 x -y - z + (\frac{2}{1000} x^2 + \frac{1}{1000} y^2 + \frac{3}{1000} z^2)$ 
is $\nabla(f) = \{\frac{x}{250} + 2 y + 2 z - 2, 2 x + \frac{y}{500} - 1, 2 x + \frac{3}{500} z - 1\}$.
The set of stationary points, which satisfies $\nabla(f) = 0$,
is $x\sim 0.49925, y\sim 0.74925, z \sim 0.24975$, and the dimension of the solution is $0$.

\begin{remark}
There have been various methods proposed which escapes flat saddle points (in the wide minima sense) in the absence of singular saddles \cite{dauphin2014identifying,pmlr-v40-Ge15,nesterov2006cubic}. Recent attempts have also been made to extend such methods in the presence of singular saddle points in limited cases \cite{anandkumar2016efficient,2017arXiv171007406L,panageas2016gradient}. With the generalized $L_2$-regularization, only the former set of methods may be required to escape saddles and achieve a deeper minimum.
\end{remark}

\section{Estimates on the Number of Isolated Solutions}\label{sec:Bound}

Now that after the generalized $L_2$ regularization, we are left with only isolated stationary points, in this section, we focus on estimates on the number of \textit{isolated} solutions of Eqs. ~\eqref{equ:reg-grad}.

\subsection{Upper Bounds on the Number of Isolated Stationary Points}

Algebraic geometry interpretation of the gradient systems of the deep linear networks allows us to utilize different theorems on root-counts of the number of complex solutions to estimate number of stationary points of the system. 
To that end, suppose that $f(x) = (f_1(x),\dots,f_n(x))$
is a polynomial system where $x\in\bC^n$, i.e., $f$ is a {\em square}
system of polynomials.  

\subsubsection{Classical B\'ezout Bound:} The simplest upper bound on 
the number of isolated complex stationary points in $V(f)$ 
is called the {\em classical B\'ezout bound} (CBB) which is simply 
the product of the degrees of the polynomials in $f$, namely
$\prod_{i=1}^n \deg f_i$.  In fact, this bound, and all
others discussed below, are generically sharp
with respect to the structure that they capture. 

From (\ref{equ:grad}) and the definition of $\lfunc^\Lambda$,
we can see that the leading terms in each polynomial are formed by the product of
$2H+1$ matrices, therefore each polynomial is of degree $2H+1$.
The CBB is therefore the product of these degrees:

\begin{proposition}
    The regularized loss function $\lfunc^\Lambda$ has at most
    $(2H+1)^n$ complex isolated critical points
    where $n$ is the total number of weights.
\end{proposition}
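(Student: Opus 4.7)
The plan is to apply the Classical Bézout Bound (CBB) stated just above the proposition: for a square polynomial system $f=(f_1,\dots,f_n)$ in $n$ complex variables, the number of isolated complex solutions is at most $\prod_{i=1}^n \deg f_i$. I will identify the gradient equations $\nabla\lfunc^\Lambda = \mathbf{0}$ as such a square system in the $n$ weight variables and then compute the degree of each component.

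First, I would note that $\nabla\lfunc^\Lambda = \mathbf{0}$ is indeed square: the variables are the entries of $W_1,\dots,W_{H+1}$ (totaling $n$), and there is exactly one polynomial equation for each such entry, namely the corresponding entry of the matrix $\partial\lfunc^\Lambda/\partial W_i$ from Eq.~\eqref{equ:reg-grad}. Thus CBB applies.

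Next I would compute the degree of each component. Writing Eq.~\eqref{equ:reg-grad} as
\[
    \frac{\partial \lfunc^\Lambda}{\partial W_i} =
    U_i^\top W (X X^\top) V_i^\top - U_i^\top Y X^\top V_i^\top + \Lambda_i \circ W_i,
\]
I would observe that $U_i^\top$ is a product of $H+1-i$ of the $W_j^\top$'s, that $V_i^\top$ is a product of $i-1$ of them, and that $W=W_{H+1}\cdots W_1$ is a product of all $H+1$. So the leading term $U_i^\top W (XX^\top) V_i^\top$ is polynomial of total degree $(H+1-i)+(H+1)+(i-1) = 2H+1$ in the weight entries (the data matrices $X,Y$ contribute only constants). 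The middle term $U_i^\top Y X^\top V_i^\top$ involves only $2H$ weight matrices and so has degree $2H$, and the regularization term $\Lambda_i\circ W_i$ is linear. Therefore each entry of $\partial\lfunc^\Lambda/\partial W_i$ is polynomial of degree exactly $2H+1$.

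Finally, CBB yields the bound
\[
    \prod_{\text{entries}} (2H+1) = (2H+1)^n,
\]
which is the claimed inequality. The only subtlety worth flagging is that CBB bounds the number of isolated solutions in $\bC^n$, so this automatically bounds the number of isolated complex (and hence real) critical points of $\lfunc^\Lambda$; no additional argument about infinity is needed here. There is no real obstacle: the proof reduces to a clean degree count in the three terms of Eq.~\eqref{equ:reg-grad} followed by direct invocation of CBB.
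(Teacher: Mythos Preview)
Your proof is correct and follows exactly the paper's approach: identify each gradient equation as a polynomial of degree $2H+1$ (from the leading term $U_i^\top W (XX^\top) V_i^\top$) and apply the classical B\'ezout bound to the square system. One inconsequential slip: the middle term $U_i^\top Y X^\top V_i^\top$ involves $(H+1-i)+(i-1)=H$ weight matrices, not $2H$, but this does not affect the argument since the leading term's degree $2H+1$ is what determines the CBB.
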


The CBB is sharp when the system of equations is completely dense, i.e., each polynomial contains all the possible monomials with degrees equal or less than the degree of the polynomial. The systems arising from real-world applications, same as Eqs.~\eqref{equ:reg-grad}, are however sparse. Sparse systems have in general significantly smaller number of complex isolated solutions than the CBB. Hence, we need a root-count which takes the sparsity structure of the polynomial systems such as the Bernshtein-Kushnirenko-Khovanskii bound.

\subsubsection{BKK Bound:} Another refinement of the B\'ezout bound takes into consideration
the geometry of the ``monomial structure'' of the polynomial system $f$.

This number is known as the \emph{Bernshtein-Kushnirenko-Khovanskii Bound},
or simply \emph{BKK Bound}~\cite{bernshtein_number_1975,huber_polyhedral_1995}, 
and it is given by a geometric invariant defined on the monomial structure ---
the \emph{mixed volume} of the convex bodies created by the set of monomials
appear in $f$ (i.e., the \emph{Newton polytopes} of $f$).
This bound is given for number of solutions in $\mathbb{C}^n - \{\textbf{0}\}$ but can also be extended to a bound on number of isolated solutions in
$\mathbb{C}^n$~\cite{li_bkk_1996,rojas_counting_1996}.

The BKK bound provides a much tighter refinement on this upper bound.
As shown in Table~\ref{tab:bkk1}, the BKK bound the gradient system of
$\lfunc^\Lambda$ is much lower than the B\'ezout number.

\subsection{Analytical Results for Mean Number of Real Solutions of Random Polynomial Systems}
There are only a handful of results known for the upper bounds on the number of isolated real stationary points of polynomial loss functions \cite{dedieu2008number}, or for upper bounds on the number of isolated real solutions of systems of polynomial equations \cite{kac1943average,kac1948average,edelman1995many,blum1998f,kostlan2002expected,azais2005roots,armentano2009random}. 

To gain further insight on the number of real stationary points of Eq.\eqref{equ:reg_loss} (with entries of $X$ and $Y$ picked randomly from a random distribution), we compare the existing analytical results for the mean number of real stationary points of random polynomial cost functions. The most general random polynomial cost function is written as:
\begin{equation}\label{eq:random_poly}
    F(\textbf{x}) = \sum_{|\alpha| < (2H+2)} a_{\alpha} x_1^{\alpha_1}\dots x_{n}^{\alpha_n},
\end{equation}
with $n$ being the number of variables and $(2H+2)$ is the highest degree of the monomials. $\alpha = (\alpha_1, \dots, \alpha_n) \in \mathbf{N}^n$ is a multi-integer with $|\alpha| = \alpha_1 + \dots + \alpha_n$. Here, $a_\alpha$ are random coefficients i.i.d. drawn from the Gaussian distribution with mean 0 and variance 1. In 
\cite{dedieu2008number}, it was shown that the mean number of real stationary points of this cost function, i.e., the mean number of real solutions of the corresponding gradient system $\frac{\partial F(\textbf{x})}{\partial x_i} = 0$ for $i=1, \dots, n$, is:
\begin{equation}\label{eq:DM_bound_for_deep_linear_network}
    \mathcal{N}_{DM} (H,n) = \sqrt{2}(2H+1)^{\frac{n+1}{2}},
\end{equation}
i.e., the mean number of real stationary points of the random polynomial of the same degree and number of variables as the loss function of the deep linear network. This result also yields that the mean number of real stationary points of such a dense random polynomial function is significantly smaller than the corresponding CBB.

\subsection{A Few Words on the Equations for the Zero Training Error}
In this subsection, we briefly consider the problem of finding a special type of minima, called the zero training error minima, as these were recently studied for certain class of deep learning models in \cite{liao2017theory}. In \cite{liao2017theory}, deep nonlinear networks with rectified linear units (ReLUs) were considered and the ReLUs were approximated with polynomials of certain degree. Then, the classical B\'ezout theorem was applied to conclude that for the case when there are more weights than the number of data points, there always are infinitely many global minima expected. 

For deep linear networks, the zero training error minima are the minima which satisfy the equations $\mathcal{\bar{L}}(W) = 0$, i.e.,
\begin{equation}\label{eq:zero_error}
(W_{H+1} W_H \cdots W_1 X)_{\cdot,i} - Y_{\cdot,i} = 0,
\end{equation}
for all $i=1,\dots, m$. It must be emphasized that these minima may only exist if the model can fit \textit{all} the training data perfectly well. Except for some special cases, it is also difficult to know if such minima exist for a chosen model a priori. Clearly, such minima are the global minima of the model for the specific dataset. Here, we \textit{assume} that such zero training error minima do exist for our deep linear networks for the given data matrices. Then, Eqs. (\ref{eq:zero_error}) is again a system of $m$ polynomial equations in $n$ variables. In short, for $H \ge 1$ and $Y \ne 0$, 
the zero training error minima system (Eqs. \eqref{eq:zero_error}) has no isolated solutions. For the case when $m = n$, the CBB is $(H+1)^n$ complex isolated solutions.

We emphasis that the assumption that such zero training error minima do exist is a very strong one as it means that each data point is exactly fit, which either may not occur in practice or may be a case of over-fitting.

\begin{remark}
For the underdetermined systems, the CBB and BKK are actually bounds on the number of connected components (flat stationary points).
The existence of positive-dimensional components reduces
the maximum number of isolated solutions. In fact, even for an apparently underdetermined system, it may be possible to have only isolated stationary points. However, except for the special case of $m=n$, these bounds do not provide any detailed information about number of flat stationary points.
\end{remark}

\subsection{Symmetrical Solutions}
In this subsection, we show the existence of some symmetries in the solutions of the gradient equations of the deep linear networks.
\begin{proposition}
    When $H=1$, if $W_0^*$ and $W_1^*$ form a solution to system~\eqref{equ:reg-grad}, then the vector formed by simultaneously reversing the signs of the $i$-th row of $W_0^*$ and $i$-th column of $W_1^*$ is also a solution for $i=1,...d_1$.
\end{proposition}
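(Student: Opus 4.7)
The plan is to exhibit the symmetry explicitly by conjugating with a diagonal sign-flip matrix, and then check that each of the two gradient equations in (\ref{equ:reg-grad}) is preserved under this transformation.

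For fixed $i \in \{1,\dots,d_1\}$, let $D_i \in \mathbb{R}^{d_1 \times d_1}$ be the diagonal matrix that equals the identity except for a $-1$ in the $(i,i)$-entry. Note that $D_i^\top = D_i$ and $D_i^2 = I$. Define the transformed weights
\[
\tilde{W}_1 = D_i W_1^*, \qquad \tilde{W}_2 = W_2^* D_i.
\]
Left-multiplication by $D_i$ flips the sign of the $i$-th row, and right-multiplication by $D_i$ flips the sign of the $i$-th column, so $(\tilde{W}_1, \tilde{W}_2)$ is precisely the vector described in the statement. The first key observation is that the end-to-end product $W = W_2 W_1$ is unchanged, since $\tilde{W}_2 \tilde{W}_1 = W_2^* D_i D_i W_1^* = W_2^* W_1^*$. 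In particular, the data-dependent middle factor $M := W X X^\top - Y X^\top$ is the same whether evaluated at $(W_1^*,W_2^*)$ or at $(\tilde{W}_1,\tilde{W}_2)$.

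The second key observation is a Hadamard-product compatibility: because $D_i$ only rescales a single row (resp.\ column) by $-1$, for any matrix $A$ of appropriate size we have
\[
\Lambda_1 \circ (D_i A) = D_i (\Lambda_1 \circ A), \qquad \Lambda_2 \circ (A D_i) = (\Lambda_2 \circ A) D_i.
\]
Now substitute $(\tilde{W}_1, \tilde{W}_2)$ into the $i=1$ equation of (\ref{equ:reg-grad}), noting that for $H=1$ the factors are $U_1^\top = W_2^\top$ and $V_1^\top = I$:
\[
\tilde{W}_2^\top M + \Lambda_1 \circ \tilde{W}_1 = D_i W_2^{*\top} M + D_i (\Lambda_1 \circ W_1^*) = D_i \bigl( W_2^{*\top} M + \Lambda_1 \circ W_1^* \bigr) = 0,
\]
where the last equality uses that $(W_1^*, W_2^*)$ is a critical point. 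Similarly, for $i=2$ we have $U_2^\top = I$ and $V_2^\top = W_1^\top$, so
\[
M \tilde{W}_1^\top + \Lambda_2 \circ \tilde{W}_2 = M W_1^{*\top} D_i + (\Lambda_2 \circ W_2^*) D_i = \bigl( M W_1^{*\top} + \Lambda_2 \circ W_2^* \bigr) D_i = 0.
\]
Hence $(\tilde{W}_1, \tilde{W}_2)$ satisfies the gradient system and is itself a critical point.

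The proof is essentially a routine verification; the only minor subtlety is the Hadamard-product step, which could fail if the regularization matrices $\Lambda_1, \Lambda_2$ were themselves allowed to depend on sign patterns of $W$. Since $\Lambda_1, \Lambda_2$ are fixed matrices of parameters, the compatibility identities hold verbatim. I expect no substantial obstacle beyond keeping the index conventions for left/right multiplication straight between the two equations.
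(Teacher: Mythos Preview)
Your proof is correct and rests on the same two observations as the paper's: the end-to-end product $W_2 W_1$ is invariant under the sign flip, and the regularization term transforms compatibly. The paper carries this out by decomposing $W_0^*$ into rows $r_j$ and $W_1^*$ into columns $c_j$ and checking the $i$-th row equation directly, whereas you package the sign flip as conjugation by the diagonal matrix $D_i$ and factor it out of each gradient equation in one line; the matrix formulation is cleaner and makes the Hadamard compatibility step more transparent, but the underlying argument is the same.
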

\begin{proof}
Let 
$$W_0^*=\left(\begin{array}{cc}
     r_1 \\
 r_2\\
 \vdots\\
 r_{d_1}
\end{array}\right),\hbox{ and } W_1^*=(c_1,c_2,\ldots,c_{d_1}),$$
where $r_i$ represents the $i$-th row of $W_0^*$ and $c_i$ represents the $i$-th column of $W_1^*$. Then system \eqref{equ:reg-grad} can be rewritten as 
\begin{equation}\label{Eq:prop5_dw1}
    \left(\begin{array}{cc}
         c_1^T  \\
         \vdots\\
         c_{d_1}^T
    \end{array}\right)((\sum_{i=1}^{d_1}c_ir_i)XX^T-YX^T)+\Lambda\circ W_0^*=0
\end{equation}
\begin{equation}\label{Eq:prop5_dw2}
    ((\sum_{i=1}^{d_1}c_ir_i)XX^T-YX^T)\left(\begin{array}{cc}
         r_1  \\
         \vdots\\
         r_{d_1}
    \end{array}\right)+\Lambda\circ W_1^*=0
\end{equation}
For \eqref{Eq:prop5_dw1}, use the property that the rows of product of two matrix are linear combinations of rows of the right side matrix, we have the following
\begin{equation}\label{eq:prop5_rows}
    c_i^T((\sum_{i=1}^{d_1}c_ir_i)XX^T-YX^T)+\Lambda(i,:)\circ r_i=0,
\end{equation}
where $\Lambda(i,:)$ represents the $i$-th row of $\Lambda$. It is immediately clear that when the signs of $c_i$ and $r_i$ changes simultaneously, \eqref{eq:prop5_rows} remains true. Thus, ~\eqref{Eq:prop5_dw1} holds. Using similar technique on the transpose of~\eqref{Eq:prop5_dw2}, we can show that \eqref{Eq:prop5_dw2} also holds. Hence, we conclude the proof. 
\end{proof}
It follows from Prop 5 that, for $H=1$ and $d_1=n$ if \eqref{equ:reg-grad} has a solutions such that all entries of $W_0^*$ and $W_1^*$ are nonzero, then it has at least $2^n$ solutions. This proof can be generalized to $H=n$ to show that given a solution $W_i^*$, when reversing the sign of the $i$-th row of $W_j^*$ and $i$-th column of $W_{j+1}$ for $i<d_j$ and $j<H+1$, it still forms a solution.

\section{Numerically Finding All the Stationary Solutions of the Deep Linear Networks}\label{sec:num_results}
Though solving systems of non-linear equations is a prohibitively difficult task, identifying the system ~\eqref{equ:reg-grad} as a system of polynomial equations, several sophisticated computational algebraic geometry techniques can be employed to find all isolated complex solutions of the system. The purely real solutions can then be trivially sorted out from the complex solutions. Symbolic methods such as the Gr\"obner basis \cite{CLO:98, CLO:07} and real algebraic geometry \cite{bpr:03} techniques can be used to solve these systems, though they may severely suffer from algorithmic complexity issues. Homotopy continuation methods have been applied to find minima and stationary points of artificial neural networks in the literature \cite{coetzee1995homotopy,485634,1555985,chow1992new,mobahi2015link,anandkumar2017homotopy}. However, though these traditional homotopy continuation methods perform well in finding multiple solutions (and often guarantee to global convergence to a solution), they do not guarantee to find all isolated solutions. In this section, we describe a sophisticated method called the numerical homotopy continuation (NPHC) \cite{SW05,Li:2003} method which guarantees to find \textit{all} complex isolated solutions of systems of multivariate polynomial equations. Then, we present our results for the deep linear networks using the NPHC method.

\subsection{The NPHC Method}
For a system of polynomial equations $\textbf{f}(\textbf{x}) = \textbf{0}$ where $\textbf{f}(\textbf{x}) = (f_1(\textbf{x}), \dots, f_n(\textbf{x}))$ and $\textbf{x} = (x_1, \dots, x_n)$, with $d_i$s being the degree of $f_i(\textbf{x})$ for all $i = 1, \dots, n$, first, one estimates an upper bound, such as the ones described in Sec.~\ref{sec:Bound}, on the number of isolated complex solutions. Then, another system $\textbf{g}(\textbf{x}) = \textbf{0}$ with $\textbf{g}(\textbf{x})= (g_1(\textbf{x}), \dots, g_n(\textbf{x}))$ is constructed such that (1) the number of complex solutions of the new system is exactly the same as the upper bound of $\textbf{f}(\textbf{x}) = \textbf{0}$, and (2) the new system is easy to solve. For the CBB, a straightforward choice for the new system $\textbf{g}(\textbf{x}) = \textbf{0}$ can be $\textbf{g}(\textbf{x}) = (x_1^{d_1} - 1, \dots, x_n^{d_n} -1 )$. For tighter upper bounds, constructing the new system may turn out to be more involved and the reader is referred to \cite{SW:05, BHSW13} for further details.

Then, a parametrized system, $\textbf{h}(\textbf{x}; t) = 0$, is formed such that $\textbf{f}(\textbf{x}) = \textbf{0}$ and $\textbf{g}(\textbf{x}) = \bf{0}$ are specific parameter points of $\textbf{h}(\textbf{x};t)$:
\begin{equation}
    \textbf{h}(\textbf{x};t) = (1-t) \textbf{f}(\textbf{x}) + \gamma t \textbf{g}(\textbf{x}) = \textbf{0}.
\end{equation}
Here, $t\in \mathbf{R}$ such that we have $\textbf{h}(\textbf{x};1) = \textbf{g}(\textbf{x}) = \textbf{0}$ and $\textbf{h}(\textbf{x};0) = \textbf{f}(\textbf{x}) = \textbf{0}$. $\gamma \in \mathbf{C}$ is a generic complex number. The system $\textbf{h}(\textbf{x};t)=0$ is also called a homotopy, more specifically, a polynomial homotopy.

Then, each complex isolated solution of $\textbf{h}(\textbf{x};1) = \textbf{g}(\textbf{x}) = \textbf{0}$, all of which are known by construction, is evolved from $t=1$ to $t=0$ using a numerical predictor-corrector method. As long as $\gamma$ is a generic complex number, all the complex isolated solutions of $\textbf{f}(\textbf{x}) = \textbf{0}$ can be reached starting from $\textbf{g}(\textbf{x}) = \textbf{0}$. Specifically, it is proven \cite{morgan1987computing} that each of such solution paths only exhibits either of the two characteristics: (1) the path converges to $t=0$ and hence a solution of $\textbf{f}(\textbf{x}) = \textbf{0}$ is found, or (2) the path diverges to infinity, i.e., the solution path is regular over $t\in (0,1]$ and yields no bifurcation, singularities, path-crossing, etc. Hence, after tracking all possible solution paths (as many as the estimated upper bound), we achieve all complex isolated solutions of $\textbf{f}(\textbf{x}) = \textbf{0}$. Moreover, the method is embarrassingly parallelizeable since each solution path can be tracked independent of each other.

\noindent \textit{Example-5:}
Using the data from Example-2, if we utilize $\lambda = 0.01$,
there are only $13$ isolated stationary points,
$5$ of which are real.  
Two are local minima that are both global minima
and the other three are saddles.

\subsection{Computational Details}\label{sec:cases_details}

In the following, we show the numerical results for the effect of changing $\Lambda$, $d_x$, $d_y$ ,$m$ and $H$ on number of isolated real solutions. For each case, we take each entry of the data matrices $X\in\bR^{d_x\times m}$ and $Y\in\bR^{d_y\times m}$ i.i.d. drawn from the Gaussian distribution with mean 0 and variance 1. We i.i.d. draw each $\lambda_i$ from $\Lambda\in[0,\delta]^{d_x d_1+\sum_{i=1}^{H-1} d_i d_{i+1}+d_H d_y}$, i.e., from the uniform distribution between 0 and $\delta$. For every case, all isolated solutions to each of 1000 samples are computed using the software \emph{Bertini} \cite{BHSW13} which is an efficient implementation of the NPHC method.
We explore how the change of any of the five variables affect the solutions of~\eqref{equ:reg_loss} for only modest size systems using \emph{Bertini} as we are restricted both in terms of computational resources as well as the number of starting solutions blowing up exponentially as a function of the system size.


\subsection{Results}
In this subsection, we provide results of solving Eqs.~\eqref{equ:reg-grad} for the cases described in section \ref{sec:cases_details}.

\subsubsection{Enumeration of Complex and Real Solutions}
First, to compare the numerical results with the upper bounds on the number of solutions discussed in section \ref{sec:Bound}, we list the bounds for various cases in Table \ref{tab:bkk1}.
\begin{table*}[ht]
    \centering
    \begin{tabular}{ccccccccccc}
        $H$ &$m$& $d_x$ &$d_y$&  $n$ &  CBB & BKK &  $\mathcal{N}_{\bC}$ & $\mathcal{N}_{DM} (H,n)$ &  max$(\mathcal{N}_{\bR})$ &  max$(I)$\\ \toprule
        1  &1 &  2 & 2 &  8 & $3^{8} = 6561$   &   1024 & 33 & 199 & 9 &2 \\ \midrule
        1  &1 &  3 & 2 & 10 & $3^{10}$         &   5184 & 33 & 592 & 9& 2\\ \midrule
        1  &1 &  4 & 2 & 12 & $3^{12}$         &  16384 & 33 & 1786 & 9 &2\\ \midrule
        1  &1 &  5 & 2 & 14 & $3^{14}$         &  40000 & 33 & 5357 &9&2\\ \midrule
        1  &1 & 10 & 2 & 24 & $3^{24}$         & 640000 &  33  & 1301759 &9&2\\ \midrule
           1  &1 &  2 & 3 & 10 & $3^{10}$              &  5184 & 73 & 592 &  9 &2 \\ \midrule
        1  &1 &  2 & 4 & 12 & $3^{12}$              & 16384 & 129 & 1786 & 9 &2 \\ \midrule
        1  &1 &  2 & 5 & 14& $3^{14}$              & 40000 & 201 & 5357  &  9 &2 \\ \midrule
               2 &1  & 2&2 & 12 &   $5^{12} = 152587890625$      & 770048  &           641  &   6250000  & 65 &3\\  \bottomrule
    \end{tabular}
    \caption{%
        Upper bounds on the number solutions for ~\eqref{equ:reg-grad} CBB, BKK and $\mathcal{N}_{DM}$ refer to the classical B\'ezout bound, BKK bound and the Didieu-Malajovich number, respectively, and are independent of the parameter values. When the network has more than one layer, $d_i=2$ for all integers $i$. $\mathcal{N}_{\bC}$ and max$(\mathcal{N}_{\bR})$ refer to the number of isolated complex to a system with generic complex parameters and the maximum number of real solutions over all the samples. max$(I)$ is the highest index of a real solution found among all the samples.}
    \label{tab:bkk1}
\end{table*}
\begin{table*}[ht]
    \centering
    \begin{tabular}{ccccccccc}
     $H$ &$m$& $d_x$ &$d_y$&  $n$ &  $\mathcal{N}_{\bC}$ & $\mathcal{N}_{DM} (H,n)$ &  max$(\mathcal{N}_{\bR})$ &  max$(I)$\\ \toprule
    
     1  &2 &  2 & 2 & 8     &              225 & 199 &29 &4\\ \midrule
          1  &2 &3&2 & 10 &              225 & 592 &29 &4\\ \midrule
        1  &3 &  2 & 2 & 8  &              225 & 199 &29 &4\\ \midrule
        
        1  &4 &2&2 & 8   &              225 & 199 &29 &4\\ \midrule
        1  &5 &2&2 & 8   &              225 & 199 &29 &4\\ \midrule
           1  &20 &2&2 & 8   &              225 & 199 &29 &4\\ \midrule
               1  &5 &3&3 & 12    &              2537 & 1786 &73 &6\\ \bottomrule

    \end{tabular}
    \caption{Computational results of $\mathcal{N}_{\bC}$, $\mathcal{N}_{\bR}$, $\mathcal{N}_{DM}$, and $\max(I)$ for the cases $m>1$. As in Table~\ref{tab:bkk1}, $\mathcal{N}_{DM}$ is independent of the choice of $\Lambda$. For the other values $\mathcal{N}_{\bC}$, mean$(\mathcal{N}_{\bR})$, and $\max(I)$, we run 1000 samples for each case with $\Lambda\in[0,1]$ and $d_i=2$ for all integers $i$.}\label{tab:bkk2}
    \end{table*}
In Table \ref{tab:bkk1}, we record the number of weights $n$, CBB, BKK, mean number of complex solutions $\mathcal{N}_{\mathbb{C}}$, the Didieu-Malajovich number of average real solutions of random polynomial cost function $\mathcal{N}_{DM}$, the maximum number of real solutions (for all $\Lambda$-values), and the maximum index among all the solutions over all samples, for various values of $H$, $m$, $d_x$ and $d_y$ while fixing $d_1 =\dots= d_H=2$. The CBB grows exponentially with the number of variables. Though the BKK count grows rapidly as well, it is significantly smaller than CBB. However, the average number of complex solutions computed using the NPHC method is even smaller compared to these two bounds. Moreover, the maximum number of real solution is also smaller than Didieu-Malajovich number. Both these observations yield that our gradient system is highly sparse and structured compared to that of the dense polynomial cost functions \eqref{eq:random_poly}.

In Table \ref{tab:bkk2}, we record numerical results for mean number of complex solutions, $\mathcal{N}_{\mathbf{C}}$, maximum number of real solutions out of all samples max$(\mathcal{N}_{\mathbf{R}})$ and the maximum index, max$(I)$ out of all real solutions of all samples. 
Note that $m=1$ is a pathological case as it refers to only one data point case, but it still provides nonlinearity to the gradient equations to have non-trivial solutions. Moreover, the value of $m$ does not change the order of the polynomials but only the monomial structure of the polynomials. When $m=1$, the matrix $XX^T$ in \eqref{equ:reg-grad} is singular and of rank 1, which implies there are additional structure. For $m>1$, $XX^T$ is nonsingular with probability 1, the polynomial system has the very same structure yielding a constant number of complex solutions for generic values of $X$ and $Y$. 

\subsubsection{Distribution of Number of Real Solutions}
To see the impact of the regulation term $\Lambda$, we change the maximum value of the interval on which $\Lambda$ is uniformly distributed. In Figure \ref{fig:rs_lbd}, we show how the distribution of $\mathcal{N}_{\bR}$ changes as the range of $\Lambda$ changes: the mean number of reals solutions decreases as the range of $\Lambda$ increases, which yields the phenomenon of topology trivialization \cite{Kastner:2011zz,Mehta:2012qr,fyodorov2013high,Mehta:2014xya,chaudhari2015trivializing}. As $\Lambda$ values increase beyond 1, there are more samples with no real solution. Also, as $\Lambda$s approach to zero, the mean number of real solutions becomes relatively stable, though the condition number of the real solutions begin to increase which is expected because system \eqref{equ:reg-grad} tends to be the unregularized case.  

Figure \ref{fig:rs_xym} demonstrates the impact on average number of real solutions of $d_x$, $d_y$ and $m$. It yields that increase of any of the three parameters increases the mean number of real solution. Combining Figure \ref{fig:rs_xym} and Table \ref{tab:bkk1}, one notices that the more data points there are, the more real solutions to \eqref{equ:reg-grad}, on an average, there are.

\begin{figure}[!htbp]
    \centering
    \includegraphics[width=2in]{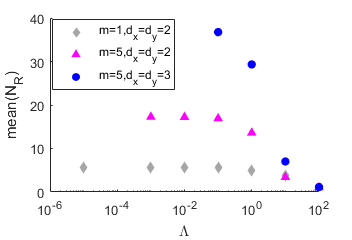}
    \caption{The mean of $\mathcal{N}_{\bR}$ as a function of range of $\Lambda$ values, for $H=1$ and $d_1=2$.}
    \label{fig:rs_lbd}
\end{figure}
\begin{figure}
    \centering
    \includegraphics[width=2in]{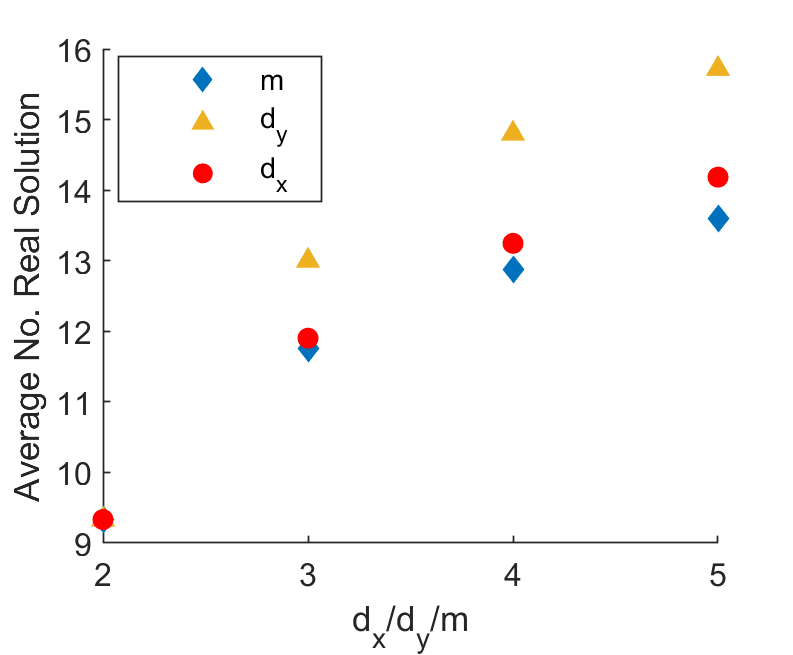}
    \caption{The mean of $\mathcal{N}_{\bR}$ for different value of $d_x$ (hexagon), $d_y$ (triangles) and $m$ (diamonds). For the cases represented by diamonds, $H=1$, and $d_x=d_y=d_1=2$. For the cases represented by triangles, $H=1$, $m=2$ and $d_x=d_1=2$. For the cases represented by circles, $H=1$, $m=1$ and $d_x=d_1=2$. The range of $\Lambda$ is $[0,1]$ for all cases. }
    \label{fig:rs_xym}
\end{figure}
\subsubsection{Index-resolved Number of Real Solutions}
For each case and each sample, we compute the ratio of number of real solution with index $I$, $\mathcal{N}_I$, to the total number of real solutions. Then we calculate the index distribution by taking the mean of the ratios for 1000 samples for each case. For the samples for which there are no solutions, we set the ratio to be 0.

Figure \ref{fig:index_dx} and \ref{fig:index_m} demonstrate the index distribution for different $d_x$, range of $\Lambda$, and $m$ respectively. From table~\ref{tab:bkk2} and right figure of Figure~\ref{fig:index_dx}, we notice that, for the cases $H=1$, $m=2$, and $d_y=d_1=2$, even though the number of variable increases from 8 to 14 as $d_x$ increasing from 2 to 5, the number of isolated solution remains the same. It also shows that when $H=1$, $m=2$, $d_y=d_1=2$, and $\Lambda\in[0,1]$, the highest index is 4 and the probability of a solutions to~\eqref{equ:reg-grad} is not an extrema of $\mathcal{L}^{\Lambda}$ increases. 

The left figure of Figure~\ref{fig:index_dx} reveals that as the range of regulation term $\Lambda$ approaches 0, the index distribution reach an equilibrium. 
Figure~\ref{fig:index_m} informs us that, for cases where $H=1$ and $d_x=d_y=d_1=2$, the highest solution index is 4 and the peak frequency is always reached by solution with index 2. Similar as in the cases with different $d_x$, the probability of a solution to~\eqref{equ:reg-grad} is a saddle point of $\mathcal{L}^{\Lambda}$ increases sub-linearly as the number of data points increases. 

\begin{figure}
    \centering
     \includegraphics[width=3.5in]{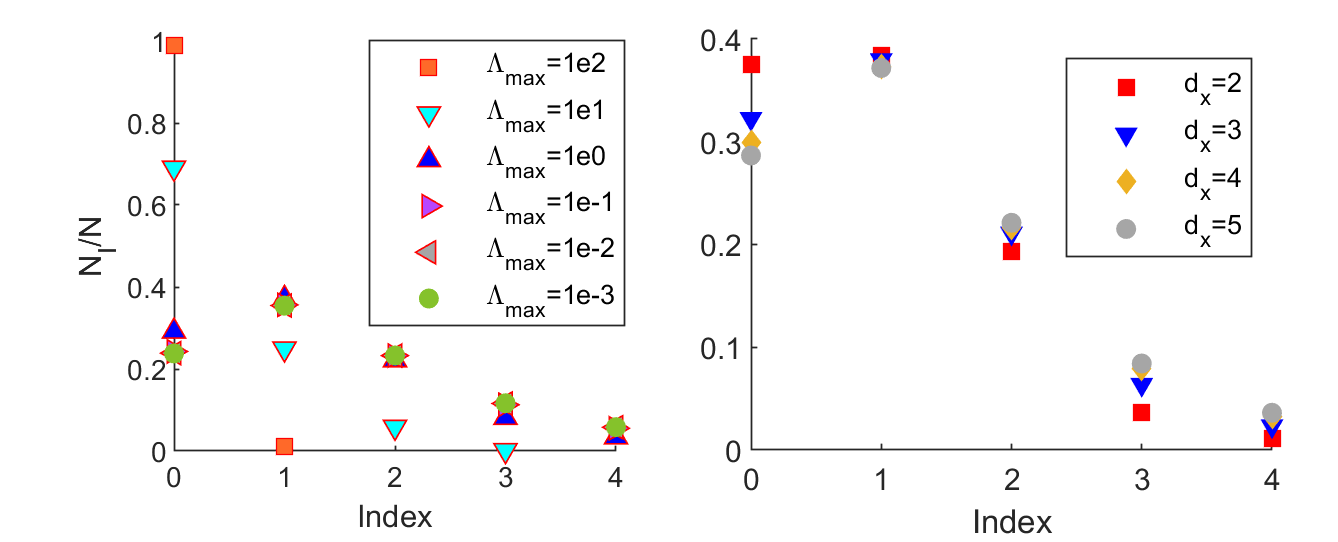}
    \caption{The index distribution for different $d_x$ and range of $\Lambda$. For the figure on the left side, the parameters are set to be $H=1$, $m=5$, $d_x=d_y=d_1=2$, and $\Lambda\in[0,\Lambda_{\max}]$. For the figure on the right side, the other parameters are set to be $H=1$, $m=2$, $d_y=d_1=2$, and $\Lambda\in[0,1]$.}
    \label{fig:index_dx}
\end{figure}

\begin{figure}
    \centering
    \includegraphics[width=2in]{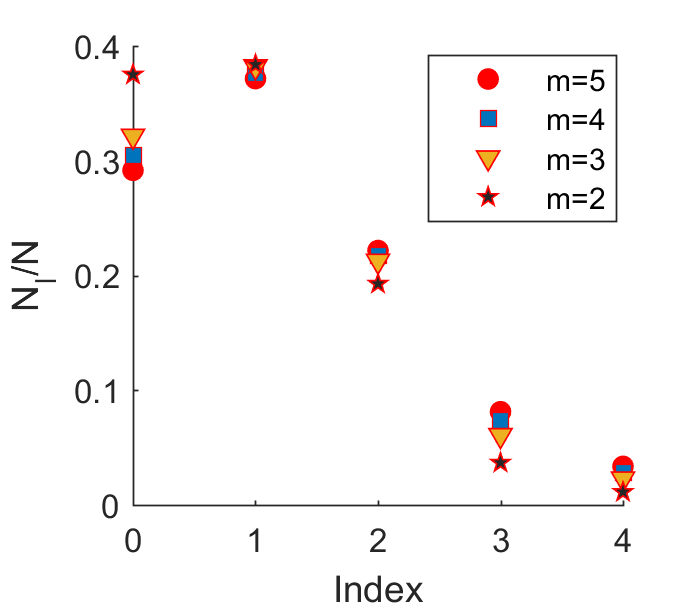}
    \caption{The index distribution for different $m$.The other parameters are set to be $H=1$, $d_x=2$, $d_y=2$ and $\Lambda\in[0,1]$.}
    \label{fig:index_m}
\end{figure}

\subsubsection{Minima}
In the following, we take a closer look at the structure of all the real solutions for each sample. It is straightforward to verify that the configuration with all weights being zero is always a solution of Eqs.~\eqref{equ:reg-grad}. For the cases where $H=1$ and $m=1$, we have total of $13,000$ samples consisting of 13 different scenarios listed in Table~\ref{tab:bkk1}, \ref{tab:bkk2} and Figure~\ref{fig:rs_lbd}. For this particular case, all local minima are global minima, and the absolute values of all the local minima are the same, i.e., all the minima are symmetrically related to each other.

For the cases where $H>1$ or $m>1$, we have total of 15000 samples consisting of 15 different scenarios as listed in Table~\ref{tab:bkk1}, \ref{tab:bkk2} and Figure~\ref{fig:rs_lbd}. Here, we observe instances where there exist local minima which are not global minima. One such instance is given in Table~\ref{tab:sol_lneqg} and Table~\ref{tab:param_lneqg}.

For the cases where $H=1$ and $m=5$, we notice that all sample runs with $\Lambda\in[0,100]$ exhibit all local minima are global minima, though, combining with the observation from Figure~\ref{fig:rs_lbd}, this may only be an artifact of the topology trivialization.

\begin{table*}[ht]
  \centering
    \begin{tabular}{rrrrrrrrrrrrr}
        $w^0_{11}$& $w^0_{21}$ &$w^0_{12}$&$w^0_{22}$& $w^0_{13}$ &$w^0_{23}$  & $w^1_{11}$ & $w^1_{21}$ &$w^1_{31}$ & $w^1_{12}$ & $w^1_{22}$ &$w^1_{32}$ & $\mathcal{L}^{\Lambda}$\\ \toprule
    
   0.42959 & 0.36758 & 0.30899 & -0.10019 & -0.01419 & -0.23650 & -0.50336 & 0.33655 & -0.01843 & -0.11969 & -0.14925 & 0.54928 & 7.13717 \\ \midrule
    -0.42959 & 0.36758 & -0.30899 & -0.10019 & 0.01419 & -0.23650 & 0.50336 & -0.33655 & 0.01843 & -0.11969 & -0.14925 & 0.54928 & 7.13717 \\\midrule
    -0.42959 & -0.36758 & -0.30899 & 0.10019 & 0.01419 & 0.23650 & 0.50336 & -0.33655 & 0.01843 & 0.11969 & 0.14925 & -0.54928 & 7.13717 \\\midrule
    0.42959 & -0.36758 & 0.30899 & 0.10019 & -0.01419 & 0.23650 & -0.50336 & 0.33655 & -0.01843 & 0.11969 & 0.14925 & -0.54928 & 7.13717 \\\midrule
    0.54286 & -0.05927 & 0.22411 & -0.05389 & -0.04306 & 0.26254 & -0.51936 & 0.16009 & 0.25030 & 0.05058 & -0.17838 & -0.07580 & 7.16775 \\\midrule
    0.54286 & 0.05927 & 0.22411 & 0.05389 & -0.04306 & -0.26254 & -0.51936 & 0.16009 & 0.25030 & -0.05058 & 0.17838 & 0.07580 & 7.16775 \\\midrule
    -0.54286 & -0.05927 & -0.22411 & -0.05389 & 0.04306 & 0.26254 & 0.51936 & -0.16009 & -0.25030 & 0.05058 & -0.17838 & -0.07580 & 7.16775 \\\midrule
    -0.54286 & -0.05927 & -0.22411 & -0.05389 & 0.04306 & 0.26254 & 0.51936 & -0.16009 & -0.25030 & 0.05058 & -0.17838 & -0.07580 & 7.16775 \\\bottomrule
    \end{tabular}%
     \caption{An instance where some local minimum are not global minimum. All real solutions with hessian index 0 are listed. The case settings are $H=1$,$m=5$, $d_x=d_y=3$, $d_1=2$, and $\Lambda\in[0,1]$. }
 
  \label{tab:sol_lneqg}%
\end{table*}%

\begin{table*}[htbp]

  \centering
      \begin{tabular}{rrrrrrrrr}
  $x_{11}$ & $x_{21}$& $x_{31}$ & $x_{12}$& $x_{22}$& $x_{32}$& $x_{13}$& $x_{23}$& $x_{33}$\\\midrule
     -0.1297 & -1.0135 & 0.2523 & 0.5236 & -1.4616 & 1.8664 & -2.1491 & -1.6352 & 1.2240 \\\bottomrule
    $x_{14}$&$x_{24}$&$x_{34}$&$x_{15}$&$x_{25}$&$x_{35}$&$y_{11}$&$y_{21}$ &$y_{31}$\\\midrule
     0.3252 & -0.4289 & 0.0116 & 0.7313 & -0.8680 & 0.9282 & 0.6973 & -0.0452 & 0.1912 \\\bottomrule
     $y_{12}$&$y_{22}$ &$y_{32}$&$y_{13}$&$y_{23}$ &$y_{33}$&$y_{14}$&$y_{24}$ &$y_{34}$\\\midrule
      -0.6288 & -0.8566 & -0.3887 & 1.0285 & -0.2397 & -0.4516 & -0.9793 & -1.1334 & 0.0221 \\\bottomrule
      $y_{15}$&$y_{25}$ &$y_{35}$&       &       &       &       &       &  \\\midrule
       1.0402 & 1.2315 & 0.5602 &       &       &       &       &       &  \\\bottomrule
    \end{tabular}\vspace{0.3in}
    
    \begin{tabular}{rrrrrrrrrrrr}
    $\Lambda^0_{11}$&$\Lambda^0_{21}$&$\Lambda^0_{12}$ &$\Lambda^0_{22}$&$\Lambda^0_{13}$& $\Lambda^0_{23}$  & $\Lambda^1_{11}$ & $\Lambda^1_{21}$   & $\Lambda^1_{31}$ &$\Lambda^1_{12}$& $\Lambda^1_{22}$&$\Lambda^1_{32}$\\\toprule 
  0.383 & 0.298 & 0.6917 & 0.8805 & 0.9245 & 0.0813 & 0.4827 & 0.1283 & 0.2529 & 0.884 & 0.1963 & 0.1214 \\\bottomrule    \end{tabular}%
    \caption{The parameters used to generate the instances with solutions listed in Table~\ref{tab:sol_lneqg}.}

  \label{tab:param_lneqg}%
\end{table*}%

\subsubsection{Loss Function at Real Solutions}
To see how the range of $\Lambda$ affect the loss function value, we compute the global minimum of each sample and plot the minimum, mean and maximum of 1000 samples of global minimum for different $\Lambda$ range. In Figure~ \ref{fig:costf}, the $\log\log$ plot of the minimum, mean, and maximum of 1000 samples for different $\Lambda$ are shown. We observe that as the range of $\Lambda$ approaches zero, the mean and minimum of the global minimum approach to nonzero constants. This implies that, for a generic case with 5 data points and two one hidden layer, there is no parameter values of $W_0$ and $W_1$ such that the loss function achieves global minimum of 0, i.e., the zero training error minima.
\begin{figure}
    \centering
    \includegraphics[width=3in]{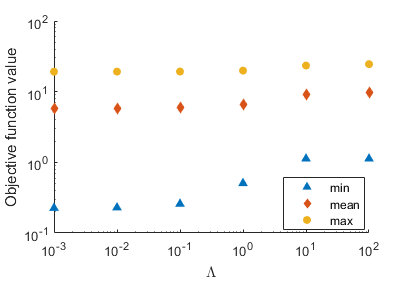}
    \caption{The minimum, mean and maximum of global minimum loss function value at real solutions of 1000 samples with different range of $\Lambda$. The other parameters are set as $H=1$, $M=5$, $d_1=d_x=d_y=2$.}
    \label{fig:costf}
\end{figure}




\section{Conclusions and Discussion}\label{sec:conclusion}
Understanding non-convexities of the optimization problems in deep learning and their implications in learning are an active area of research. Deep linear networks have served as an ideal test ground of ideas as they qualitatively captures certain features of the deep non-linear networks yet simple enough for analytical and numerical investigations. In the present paper, we have initiated an ambitious plan to understand the loss landscapes of deep networks from the algebraic geometry point of view. Our approach is to provide practicable results from algebraic geometry rather than abstract ones, by invoking computational and numerical algebraic geometry methods. 

\noindent\textit{Algebraic Geometry Interpretation:-}
In the present paper, after reviewing the existing results on the deep learning loss surfaces as well as for deep linear loss surfaces, we observed that the system of gradient equations of the deep linear networks is an algebraic system and argued that by complexifying the equations brings the problem of solving this system into the complex algebraic geometry domain. In turn we can utilize many of the mature results and methods from algebraic geometry to gain insights into the optimization landscapes of these systems.

We emphasis that the algebraic geometric interpretation of gradient equations is not restricted only to the deep linear networks: The classes of deep nonlinear networks which obviously fall under the algebraic geometry paradigm are deep polynomial networks and deep complex networks. While any other activation functions can be approximated by polynomials of finite degrees, the gradient systems for most of the contemporary activation functions used for deep nonlinear networks in practice such as hyperbolic tangent, sigmoid, rectified linear units (ReLUs), leaky ReLUs, Heaviside, etc. activation functions are, or can be transformed to, form algebraic systems. Hence, the results and methods can also be applied, after appropriate modifications, to investigate loss landscapes of deep nonlinear networks.

\noindent\textit{Flat Stationary Points and their Regularization:-}
We then reviewed the current understanding of the ''flat'' minima in deep learning and provided a distinction among different definitions of ''flat'' minima and other stationary points. In particular, a flat stationary point in our case is a connected component in the weight space such that each of the points on this component are solutions of the gradient equations and that the loss function remains strictly constant when evaluated over the whole component. Such flat stationary points also called positive-dimensional solutions where the dimension refers to the (real or complex) dimension of the component. Such a flat minimum over the real space is distinct from an isolated stationary point in the real space, though the hessian matrices evaluated at both of which are singular.

For deep linear networks, in the present paper, we showed that there do exist positive-dimensional components when no regularization is used. In the existing literature, the deep linear networks are shown to posses no local minima which are not global minima. Our results then yield that the loss surface of unregularized deep linear network consists of \textit{minima lakes} each of which are at the same level as the global minimum. In fact, the landscape also consists of \textit{stationary point lakes} with the hessian matrix having higher index at these solutions.

Then, using the generalized Sard's theorem, we showed that when an extension of $L_2$-regularization is added to the loss function, all (complex and real) stationary points become isolated, i.e., no flat stationary points exist. In addition, this regularization also removes isolated singular solutions. 

Since the stochastic gradient (SGD) descent method and its variants rely only on the first order (gradients) information while searching for a minimum, they have to pass through saddle points of higher index. The number of saddle points of higher index is usually exponentially more than the number of minima in such a high dimensional and nonllinear loss landscapes. In addition, if there are flat saddle points present in the system, the SGD may encounter further issues such as the computation getting stuck on the flat saddle point, in turn performance plateauing for many epochs. Recently, a few attempts have been made to device methods that escape from wide minima in the absence of singular solutions (and in presence of singular solutions in limited cases) \cite{dauphin2014identifying,pmlr-v40-Ge15,nesterov2006cubic,anandkumar2016efficient,2017arXiv171007406L,panageas2016gradient}. An alternative way to evade the singular solutions (both flat and degenerate) may be to use the proposed regularization which eliminates flat stationary points and minima right from the beginning of the SGD computation and hence the wide minima escaping methods can then be applied to achieve better training.

The existence and implications of flat minima have been discussed in the existing literature. In particular, it has been argued that networks trained on flat minima generalize more than when trained on sharp minima. On the other hand, it is also argued that flat minima can be easily converted to sharp minima using a reparametrization. Our results confer the former argument, though in the paradigm of the definition of flatness in the algebraic geometry sense. We also argue that since in general the loss landscapes quantitatively (and in some cases even qualitatively) changes with respect to data, unless the ''flatness'' (however defined) of the minima is an invariant of the data, the existence of flat stationary points may not be crucial for the generalization ability of the network. On the other hand, the existence of an invariance of flatness of minima and saddle points, if proven, may turn out to be crucial in understanding the generalization propoerties.

It should be noted that the existence of flat stationary points directly corresponds to continuous symmetries in the system. Various ways to break these continuous symmetries have been investigated in the literature \cite{Wales03}. The generalized $L_2$ regularization term essentially perturbs the system to leave only isolated solutions in the system. e.g., in \cite{orhan2017skip}, it is argued that skip connections in neural networks eliminate singularities as it removes certain symmetries from the system. It may be interesting to study relation between the generalized $L_2$ regularization and skipped connections. One can also project the constant zero modes of the hessian in the computation \cite{Wales03}. From the computational point of view though, the generalized $L_2$ regularization approach may be the most straightforward way to implement in the current deep learning suites.

\noindent\textit{Upper Bounds on the Number of Stationary Points and Numerical Results:-}
Once all the flat stationary points are removed from the gradient equations, the next question we addressed is how many isolated stationary points are there? When the gradient equations are treated as defined over complex space, there are many upper bounds, such as the CBB and BKK bounds, on the number of isolated complex solutions for systems of polynomial equations available in the literature which we can employ to gain insight into our systems. For the deep linear networks, the CBB and BKK for modest size networks are given in Table \ref{tab:bkk1}. 

Using these upper bounds, we employed a numerical algebraic geometry method called the numerical polynomial homotopy continuation method which guarantees to find all isolated complex solutions of such polynomial systems. In our experiments, we generated data matrices $X$ and $Y$ by drawing each of their entries independently drawn from the Gaussian distribution with mean 0 and variance 1 and $\lambda_i$s from uniform distributions between $[0,1], [0, 0.01], \dots, [0,0.00001]$ to investigate the effect of different magnitudes of $\lambda_i$s. The average number of complex and real solutions over all samples are compared with the CBB and BKK in Table \ref{tab:bkk1}. We compared these bounds with the available analytical result on the average number of real stationary points of random polynomial cost function. The average number of complex and real solutions of these systems are orders of magnitudes smaller than these bounds confirming that the deep linear systems are very \textit{sparse}. This conclusion \textit{may or may not} necessarily extend to deep nonlinear network as the structure of the corresponding polynomials may different. 

We showed that the average number of real solutions reduces as $\lambda_i$s vanishes, the phenomenon called topology trivialization \cite{Kastner:2011zz,Mehta:2012qr,fyodorov2013high,Mehta:2014xya,chaudhari2015trivializing}, and singular solutions appear as the system tends to the unregularized case where the solutions are flat.

We sorted the stationary points in terms of index (number of negative eigenvalues) of hessian matrix and showed that for some samples, \textit{there are indeed local minima which are not global minima} contrary to the available results in the unregularized case. This result is a first for the complete deep linear networks in the regularized case (in fact, Ref.~\cite{taghvaei2017regularization} is the only result available for the linear networks with non-zero regularization in a restricted case). There exist many discrete symmetries among solutions, i.e., the value of the loss function at the symmetrically related solutions is equal. We also notice that the stationary points with higher index are rare, which may be due the linearity of the activation functions but may not necessarily be a phenomenon for the nonlinear activation functions.

Investigating the loss landscapes when $X$ and $Y$ are correlated, instead of choosing their values from random distributions, may exhibit interesting characteristics of the optimization landscapes as well as the working of the deep learning to fit a given data set. Extending the algebraic geometry interpretation to deep nonlinear networks will shed further novel insights into the optimization landscapes of these models. Computational implications of the proposed regularization approach specially together with the saddles escaping methods will be an important breakthrough on these theoretical insights.

\bibliographystyle{ieeetr}
\bibliography{bibliography_NPHC_NAG}

\end{document}